\def\eqref#1{equation~\ref{#1}}
\def\1{\bm{1}}
\def\vtheta{{\bm{\theta}}}
\def\va{{\bm{a}}}
\def\vb{{\bm{b}}}
\def\vv{{\bm{v}}}
\def\vx{{\bm{x}}}
\def\vz{{\bm{z}}}
\def\mA{{\bm{A}}}
\def\mC{{\bm{C}}}
\def\mD{{\bm{D}}}
\def\mE{{\bm{E}}}
\def\mJ{{\bm{J}}}
\def\mM{{\bm{M}}}
\def\mN{{\bm{N}}}
\def\mT{{\bm{T}}}
\def\mU{{\bm{U}}}
\def\mV{{\bm{V}}}
\def\mW{{\bm{W}}}
\def\mX{{\bm{X}}}
\def\mY{{\bm{Y}}}
\def\mSigma{{\bm{\Sigma}}}
\DeclareMathAlphabet{\mathsfit}{\encodingdefault}{\sfdefault}{m}{sl}
\SetMathAlphabet{\mathsfit}{bold}{\encodingdefault}{\sfdefault}{bx}{n}
\newcommand{\E}{\mathbb{E}}
\newcommand{\R}{\mathbb{R}}
\DeclareMathOperator{\Tr}{Tr}
\newcommand{\WonetWone}{{\mW_{1}^{\top}\mW_{1}}}
\newcommand{\WtW}{{\mW^{\top}\mW}}
\newcommand{\Af}{{\mA_{f}}}
\newcommand{\Ef}{{\mE_{f}}}
\newcommand{\AfStar}{{\mA_{f^{*}}}}
\newcommand{\AfTilde}{{\mA_{\tilde{f}}}}
\newtheorem{corollary}{Corollary}
\newtheorem{theorem}{Theorem}
\newtheorem{lemma}{Lemma}
\newtheorem{definition}{Definition}
\newtheorem{example}{Example}
\numberwithin{equation}{section} %
\numberwithin{lemma}{section} %
\numberwithin{theorem}{section} %
\numberwithin{definition}{section} %
\numberwithin{corollary}{section} %
\def \R {\mathbb{R}}
\begin{document}

\title{On the Neural Feature Ansatz for Deep Neural Networks}
\author{Edward Tansley\thanks{Mathematical Institute, Woodstock Road, University of Oxford, Oxford, UK, OX2 6GG. {\tt edward.tansley@maths.ox.ac.uk.} This author's work was supported by the Mathematics of Random Systems CDT.},\quad Estelle Massart\thanks{ICTEAM Institute, UCLouvain, Euler Building, Avenue Georges Lemaˆıtre, 4 - bte L4.05.01, Louvain-la-Neuve, B - 1348, Belgium. {\tt estelle.massart@uclouvain.be}} \quad and \quad Coralia Cartis\thanks{Mathematical Institute, Woodstock Road, University of Oxford, Oxford, UK, OX2 6GG. {\tt coralia.cartis@maths.ox.ac.uk.} This author's work was supported by the Hong Kong Innovation and Technology Commission
(InnoHK Project CIMDA) and by the EPSRC grant EP/Y028872/1, Mathematical Foundations of Intelligence: An “Erlangen Programme” for AI.}}

\date{October 6, 2025}

\maketitle

\begin{abstract}
  Understanding feature learning is an important open question in establishing a mathematical foundation for deep neural networks.
  The Neural Feature Ansatz (NFA) states that after training, the Gram matrix of the first-layer weights of a deep neural network is proportional to some power $\alpha>0$ of the average gradient outer product (AGOP) of this network with respect to its inputs. Assuming gradient flow dynamics with balanced weight  initialization, the NFA was proven to hold throughout training for two-layer linear networks with exponent $\alpha = 1/2$ (Radhakrishnan et al., 2024).
  We extend this result to networks with $L \geq 2$ layers, showing that the NFA holds with exponent $\alpha = 1/L$, thus demonstrating a depth dependency of the NFA.
  Furthermore, we prove that for unbalanced initialization, the NFA holds asymptotically through training if weight decay is applied.
  We also provide counterexamples showing that the NFA does not hold for some network architectures with nonlinear activations, even when these networks fit arbitrarily well the training data. 
  We thoroughly validate our theoretical results through numerical experiments across a variety of optimization algorithms, weight decay rates and initialization schemes.
\end{abstract}

\section{Introduction}
\label{sec:intro}

Deep neural networks (DNN) typically operate in the overparametrized regime, where the number of parameters to tune in the model outweighs the size of the training data. While overparametrization endows DNNs with extreme expressivity, allowing exact interpolation of the data even when the latter are noisy realizations \citep{zhang_understanding_2021}, the good performance of DNNs observed in practice calls for \emph{implicit biases} that prevent the model to overfit the data \citep{vardi_implicit_2023}. 

This work addresses recent developments, exploring specific biases in the feature learning mechanisms, namely, the process through which neural networks extract information from high-dimensional input data. Recently, the Neural Feature Ansatz (NFA)  was proposed as a possible explanation of feature extraction, in which model weights reflect the importance that features have on model predictions (quantified by the magnitude of the partial derivatives of the model output with respect to its input). The NFA was shown empirically to hold in a wide range of models including feedforward, convolutional, recurrent neural networks as well as transformers \citep{radhakrishnan_mechanism_2024}, and was used as a posit to shed light on other behaviors such as neural collapse \citep{beaglehole_average_2024}, training instabilities \citep{zhu_catapults_2024}, and grokking \citep{mallinar_emergence_2025}. Developing a theoretical foundation for the NFA is however still an open question.

Beyond the NFA, the presence of low-dimensional structures was identified in \citet{parkinson_relu_2023, parkinson_relu_2025} in the specific setting of deep linear neural networks (with a single final ReLU layer) trained with weight decay; this type of architecture is known to provide insight into the effect of network depth \citep{arora_optimization_2018}, despite the fact that adding linear networks does not increase model expressivity. \citet{parkinson_relu_2025} showed both theoretically and empirically that increasing model depth leads to some form of  implicit bias on the linear layers, that induces the learned model to exhibit some low-dimensional behavior, by varying mostly along a subset of directions of the input space and being almost constant along its orthogonal complement.  Functions that are only varying along a low-dimensional subspace of their input space are often referred to as \emph{multi-index}, or \emph{low-rank} functions. This implicit bias also improves model generalization, which, in the case where data is generated using a multi-index target function, is higher when the low-dimensional subspace of variation is aligned with the subspace of variation of the target function. In a similar vein,  \citet{guth_rainbow_2024} explored feature extraction mechanisms across layers in random feedforward neural networks, and identified some low-rank structure and dimensionality reduction mechanisms within layers. Furthermore, the ubiquity of multi-index models motivated the development of dedicated training strategies, see \citet{bruna_survey_2025} for a survey. In particular, it was shown in \citet{mousavi-hosseini_robust_2025} that learning the low-rank structure can remove the dependence on the ambient dimension in high-dimensional settings. 

While these two lines of work both aim to advance the understanding of feature learning mechanisms,  many open questions remain. In this work, inspired by 
\citet{parkinson_relu_2023, parkinson_relu_2025} and
\citet{radhakrishnan_mechanism_2024}, we aim to uncover the role of depth in the NFA. More precisely, we prove that 
for deep linear neural networks trained under gradient flow dynamics with balanced initialization, the NFA holds exactly for all time. We further prove that this result holds asymptotically in the case of unbalanced initialization, in the presence of weight decay regularization. For nonlinear neural networks, we propose a counterexample that illustrates that the NFA does not always hold, and a second one illustrating the limitations of the NFA to account for model generalization.  We conclude the paper with numerical experiments supporting our theoretical findings as well as the dimensionality reduction mechanism resulting from the NFA when learning multi-index functions. 

Meanwhile,  the  recent work \citet{ziyin_formation_2025} derived a unifying framework involving variants of the NFA under assumptions involving alignment of gradients, features and/or weights. Even more recently, and concurrently to our work, the authors of \citet{boix-adsera_features_2025} proposed an alternative to the NFA that can be derived from first-order optimality conditions. Our results here add further understanding to this growing body of works.

\paragraph{Notation:} We consider a neural network $f_{\vtheta}$, parametrized by a set of parameters $\vtheta$, aiming to approximate some function $f^{*}$ using a set of $N$ data points $\{(\vx_{i}, y_{i})\}_{1\leq i \leq N}$, with $\vx_{i} \in \R^{d}$ and $y_{i} = f^{*}(\vx_i)$. This paper addresses $L$-layers feedforward neural networks, whose set of parameters $\vtheta$ contains weight matrices $\mW_1, \dots, \mW_L$ and biases $\vb_1, \dots, \vb_L$. The network $f_{\vtheta}$ is learned by minimizing the empirical loss $\mathcal{L}(\vtheta) = \frac{1}{N}\sum_{i=1}^{N} l_{\vtheta}(\vx_{i}, y_{i})$, for some  $l_{\vtheta} : \R \times \R \to \R$. We note $\mW_{l,t}$ the weight matrix associated with the $l$th layer at iteration $t$ of the optimization algorithm, and $\mW_{l,0}$ its corresponding value at initialization. For a given function $f$, we denote by $\Af$ the AGOP of $f$, which we define in \autoref{sec:preliminaries}. We use $\|\cdot\|_{2}$ and $\|\cdot\|_{F}$ to refer to the 2-norm and the Frobenius norm, respectively. We use $\Tr(\cdot)$ for the trace and $\ker(\cdot)$ for the kernel of a matrix. We use bold lower and uppercase characters (e.g.,  $\vv,\ \mW$) for vectors and matrices, respectively. 

\section{Preliminaries}
\label{sec:preliminaries}

\paragraph{Low-rank functions.}
Low-rank functions, also referred to as \emph{multi-index} \citep{bruna_survey_2025},  \emph{multi-ridge} \citep{tyagi_learning_2014}, functions with \emph{active subspaces} \citep{constantine_active_2015}, or functions of \textit{low effective dimensionality} \citep{cartis_learning_2024}, are functions that vary only within a (low-dimensional and unknown) linear subspace and are constant along its orthogonal complement.  These functions satisfy the following equivalent properties.

\begin{definition}%
[\citet{cartis_learning_2024}]\label{lem:low:rank:characterisation}
A continuously differentiable\footnote{Note that some of the neural networks we consider are not continuously differentiable, for example due to the classical ReLU activation, but as these models are continuously differentiable almost everywhere this does not raise any practical issue.}
function $f:\R^{d} \rightarrow \R$ has rank $r \leq d$ if it satisfies one of the following equivalent conditions:
\begin{enumerate}
\item There exists a subspace $\mathcal{T}$ of dimension $r$ such that $f(\vx_{\top}+\vx_{\perp}) = f(\vx_{\top})$ for all $\vx_{\top}\in \mathcal{T}$ and $\vx_{\perp} \in \mathcal{T}^{\perp}$.
\item There exists a subspace $\mathcal{T}$ of dimension $r$ such that $\nabla f(\vx)\in \mathcal{T}$ for all $\vx\in \R^{d}$.%
\item There exists a matrix $\mA\in \R^{r \times d}$ and a map $g:\R^r \rightarrow \R$ such that $f(\vx) = g(\mA\vx)$ for all $\vx\in \R^{d}$. 
\end{enumerate}
\end{definition}

\paragraph{Average Gradient Outer Product (AGOP).}
We recall the definition of the \textit{average gradient outer product (AGOP)} of the network with respect to the data, which we write $\mA_f \in \R^{d \times d}$:
\begin{equation}
    \mA_{f} := \frac{1}{N} \sum_{i=1}^{N} \nabla f(\vx_i) \nabla f(\vx_i)^{\top}.
\end{equation}
Note that the eigenvectors associated with the eigenvalues of $\mA_{f}$ with largest magnitude correspond to the directions in which perturbations to the data have the largest effect on the network output,  when averaged over the data points. Indeed, let $\vz$ be some arbitrary vector, then 
\begin{equation} \label{eq:partial_deriv}
    \frac{1}{N} \sum_{i = 1}^N \|\nabla f(\vx_i)^{\top}\vz \|_{2}^{2} = \vz^{\top}\mA_{f}\vz; 
\end{equation}
choosing $\vz$ as the eigenvector of $\mA_f$ associated with its largest eigenvalue will maximize the right-hand side of \eqref{eq:partial_deriv} over all unit-norm vectors (by the definition of the Rayleigh quotient), hence, the left hand-side of \eqref{eq:partial_deriv}.  
Note that, if the network has a multivariate output, a similar expression involving the Jacobian can be used \citep{radhakrishnan_mechanism_2024}. Noting that the rows of the Jacobian are themselves the transposes of the gradients, for $f(\vx) = (f_1(\vx),\ldots,f_m(\vx))^{\top}$, we have that $\mJ_f(\vx)^{\top}\mJ_{f}(\vx) = \sum_{j=1}^{m} \nabla f_{j}(\vx) \nabla f_{j}(\vx)^{\top}$.

\paragraph{The Neural Feature Ansatz (NFA).}  The  \textit{Neural Feature Ansatz} (see \citet{radhakrishnan_mechanism_2024}) states that the weight matrix associated to the first layer can explain the structure of the AGOP matrix:
\begin{equation}
     \mW_{1}^{\top}\mW_{1}\propto (\mA_{f})^{\alpha}
\end{equation}
for some $\alpha > 0$. A value of $\alpha = 1/2$ is proposed and proven in the case of a 2-layer linear network (under gradient flow and balanced initialization) \citep{radhakrishnan_mechanism_2024}.

In \autoref{fig:NFA_visualized}, we include an illustration of the NFA. We see that after training a 5-layers network with weight decay, $\mW_{1}^{\top}\mW_{1}\propto (\mA_{f})^{1/5}$. Furthermore, these two matrices are approximately equal, supporting our results in \autoref{sec:nfa_proof}.

\begin{figure*}[t]
    \centering
    \begin{subfigure}{0.22\textwidth}
        \centering
        \includegraphics[width=\textwidth]{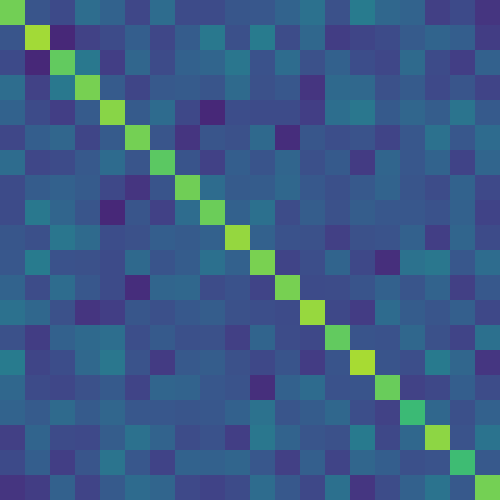}
        \caption{}
    \end{subfigure}
    \hfill
    \begin{subfigure}{0.22\textwidth}
        \centering
        \includegraphics[width=\textwidth]{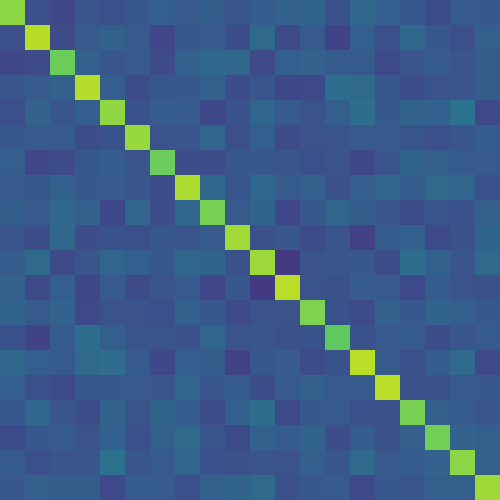}
        \caption{}
    \end{subfigure}
    \hfill
    \begin{subfigure}{0.22\textwidth}
    \includegraphics[width=\textwidth]{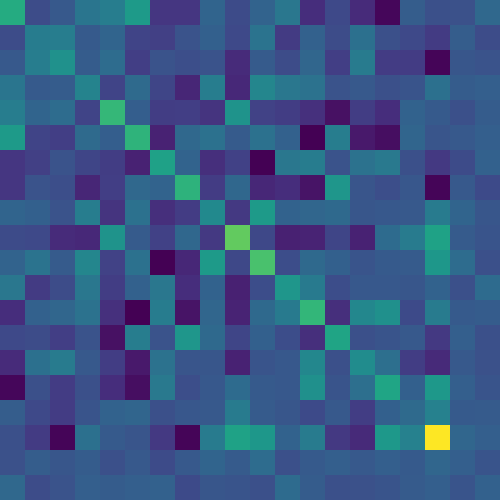}
    \caption{}
    \end{subfigure}
    \hfill
    \begin{subfigure}{0.2655\textwidth}
        \centering
        \includegraphics[width=\textwidth]{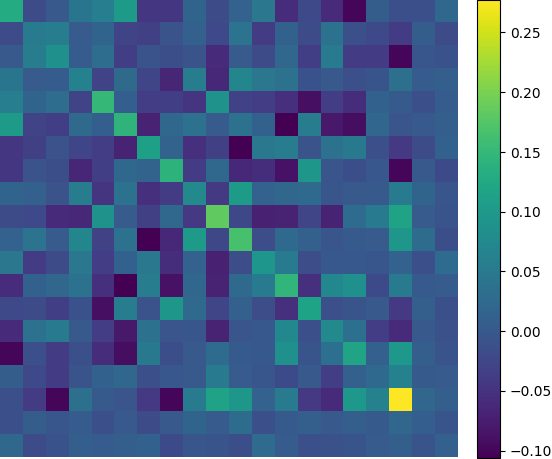}
        \caption{}
    \end{subfigure}
    \caption{\textbf{The NFA holds after training even with unbalanced initialization.}\\(a) $\WonetWone$ before training. (b) $(\Af)^{1/L}$ before training. (c) $\WonetWone$ after training. (d) $(\Af)^{1/L}$ after training, with $L=5$ linear layers. Alignment as measured by cosine similarity (\autoref{def:cosine_sim}): before training = 0.915; after training = 1.000. We plot the same experiment as in \autoref{fig:cosine_similarity_though_time} (a) which is described in \autoref{sec:numerics}.}
    \label{fig:NFA_visualized}
\end{figure*}

\section{The NFA for deep linear networks}
\label{sec:nfa_proof}

In this section, we prove that the NFA holds (at least asymptotically) for deep linear neural networks when the latter are trained with gradient flow dynamics and weight decay regularization. Thus we extend  the results of  \citet{radhakrishnan_mechanism_2024}, that were restricted to 2-layer NNs. Throughout this section we shall consider deep linear networks of the form
\begin{equation}
    f(\vx) = \mW_{L}\mW_{L-1}\cdot\ldots\cdot\mW_{1}\vx
\end{equation}
for some $L \geq 2$.
The Jacobian of such a network is given by $\mJ_{f}(\vx) = \mW_{L}\mW_{L-1}\cdot\ldots\cdot\mW_{1}$ for all $\vx \in \R^{d}$. We therefore write $\mJ_f$ instead of $\mJ_f(\vx)$. Due to this constant Jacobian, there holds $\Af = \mJ_{f}^{\top}\mJ_{f}$.

Here we assume the gradient flow dynamics, namely, 
\begin{equation}
    \frac{\partial\mW_{l, t}}{\partial t} = -\frac{\partial \mathcal{L}(\vtheta_t)}{\partial \mW_{l, t}},\label{eq:gradient_flow}
\end{equation}

where $\mW_{l, t}$ and $\vtheta_t$ are respectively the weights of the $l$th layer and the total set of parameters at time $t \geq 0$.

As a first step, we assume balanced initialization of the weight matrices in the model, where balancedness is defined as follows. 
\begin{definition}[Balanced matrices]
    Two weight matrices $\mW_{l} \in \R^{k \times m}$ and $\mW_{l+1} \in \R^{n \times k}$ are said to be balanced if $\mW_{l}\mW_{l}^{\top}$ = $\mW_{l+1}^{\top}\mW_{l+1}$.
\end{definition}
We say that a network is balanced if the weight matrices of each pair of adjacent layers are balanced and an initialization of the network weights is balanced if the network is balanced at time $t=0$.

The following result states that if this balancedness property holds at initialization, it holds for all $t \geq 0$ assuming the weights follow gradient flow dynamics.

\begin{lemma}[\citet{arora_optimization_2018}]\label{lem:balancedness_preserved}
    For time $t \geq 0$, let us define $f_{t}(\vx) = \mW_{L,t}\mW_{L-1,t}\cdots \mW_{1,t}\vx$ for $\vx \in \R^{d}$.
    Suppose that $\mW_{1,t},\mW_{2,t},\ldots,\mW_{L,t}$ follow the gradient flow dynamics given by \autoref{eq:gradient_flow}, then for any $t\geq0$ and $1 \leq l < L$, there holds
    \begin{equation}
        \mW_{l+1,t}^{\top}\mW_{l+1,t} - \mW_{l,t}\mW_{l,t}^{\top} =  \mW_{l+1,0}^{\top}\mW_{l+1,0} - \mW_{l,0}\mW_{l,0}^{\top}
    \end{equation}
\end{lemma}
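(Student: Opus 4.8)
The plan is to show that the matrix-valued quantity in the statement has zero time derivative under gradient flow, so that it stays frozen at its initial value; integrating over $[0,t]$ then yields the claim.

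First I would record the structured form of the layer gradients for the linear parametrization $f_t(\vx) = \mW_{L,t}\mW_{L-1,t}\cdots\mW_{1,t}\vx$. A chain-rule computation shows that there is a single matrix $\mP_t$ — the average over the data of the per-sample output-error vectors times the inputs, which does not depend on the layer index — such that, for every $1 \le l \le L$,
\[
  \frac{\partial \mathcal{L}(\vtheta_t)}{\partial \mW_{l,t}} \;=\; \bigl(\mW_{L,t}\cdots\mW_{l+1,t}\bigr)^{\top}\,\mP_t\,\bigl(\mW_{l-1,t}\cdots\mW_{1,t}\bigr)^{\top},
\]
with the convention that an empty product of weight matrices equals the identity (this covers $l=1$ and $l=L$). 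The key observation is that, having fixed $l$ with $1 \le l < L$ and set $\mQ_t := \bigl(\mW_{L,t}\cdots\mW_{l+2,t}\bigr)^{\top}\mP_t\bigl(\mW_{l-1,t}\cdots\mW_{1,t}\bigr)^{\top}$, the two gradients entering the invariant both factor through $\mQ_t$: namely $\partial\mathcal{L}(\vtheta_t)/\partial\mW_{l,t} = \mW_{l+1,t}^{\top}\mQ_t$ and $\partial\mathcal{L}(\vtheta_t)/\partial\mW_{l+1,t} = \mQ_t\mW_{l,t}^{\top}$.

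Next I would substitute the gradient flow equation \autoref{eq:gradient_flow}, i.e.\ $\dot\mW_{l,t} = -\partial\mathcal{L}(\vtheta_t)/\partial\mW_{l,t}$, together with these two factorizations, into the time derivatives of the two terms. By the product rule one gets
\[
  \frac{d}{dt}\bigl(\mW_{l+1,t}^{\top}\mW_{l+1,t}\bigr) = -\,\mW_{l,t}\mQ_t^{\top}\mW_{l+1,t} - \mW_{l+1,t}^{\top}\mQ_t\mW_{l,t}^{\top},
\]
\[
  \frac{d}{dt}\bigl(\mW_{l,t}\mW_{l,t}^{\top}\bigr) = -\,\mW_{l+1,t}^{\top}\mQ_t\mW_{l,t}^{\top} - \mW_{l,t}\mQ_t^{\top}\mW_{l+1,t}.
\]
The two right-hand sides are identical, hence $\frac{d}{dt}\bigl(\mW_{l+1,t}^{\top}\mW_{l+1,t} - \mW_{l,t}\mW_{l,t}^{\top}\bigr) = 0$ for all $t \ge 0$, and integrating from $0$ to $t$ gives the stated equality with the right-hand side evaluated at $t=0$.

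The only genuine content is the bookkeeping in the first step: getting the transposes right and, crucially, recognizing that one and the same middle factor $\mQ_t$ appears in both $\partial\mathcal{L}(\vtheta_t)/\partial\mW_{l,t}$ and $\partial\mathcal{L}(\vtheta_t)/\partial\mW_{l+1,t}$ — this shared structure is exactly what forces the two derivative expressions to coincide. The remaining points are routine: the empty-product convention handles the boundary indices $l=1$ and $l=L-1$, and one uses that the gradient flow \autoref{eq:gradient_flow} admits a solution on $[0,\infty)$ (true here since $\mathcal{L}$ is smooth) so the integration step is legitimate. Since the whole argument reduces to this direct computation, we simply quote the result from \citet{arora_optimization_2018}.
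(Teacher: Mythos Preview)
Your argument is correct and is precisely the standard derivation from \citet{arora_optimization_2018}. Note that the paper does not supply its own proof of this lemma; it simply quotes the result, so there is nothing further to compare.
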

Hence if the layers in a network are balanced at $t = 0$, they are balanced for all $t \geq 0$ when the network is trained under gradient flow. 

We now present a result proving that the NFA holds for $L$-layer linear networks, extending the result of \citet{radhakrishnan_mechanism_2024}, where 2-layer linear networks are considered.
This result suggests that the $\alpha$ value in the Neural Feature Ansatz has a depth dependency, rather than being a fixed value such as $1/2$.

\begin{theorem}[NFA for deep linear networks]\label{thm:1_over_l}
    For $t \geq 0$, let $f_{t}(\vx) = \mW_{L,t}\mW_{L-1,t}\cdots \mW_{1,t}\vx$ for $\vx \in \R^{d}$.
    Suppose that $\mW_{1,t},\mW_{2,t},\ldots,\mW_{L,t}$ follow the gradient flow dynamics given by \autoref{eq:gradient_flow}. Suppose additionally that $\mW_{1,0},\mW_{2,0},\ldots,\mW_{L,0}$ are initialized to be balanced ($\mW_{l,0}\mW_{l,0}^{\top} = \mW_{l+1,0}^{\top}\mW_{l+1,0}$ for $l=1,\ldots,L-1$) then at any time $t\geq0$, there holds
    \begin{equation}
        \mW_{1,t}^{\top}\mW_{1,t} = \left ( \mA_{f,t} \right )^{1/L}.
    \end{equation}
    where $\mA_{f, t} = \mJ_{f_t}(\vx)^{\top}\mJ_{f_t}(\vx)$.
\end{theorem}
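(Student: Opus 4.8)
The plan is to combine the preservation of balancedness from \autoref{lem:balancedness_preserved} with a short induction that collapses the product $\mW_{L,t}\cdots\mW_{1,t}$ into a power of $\mW_{1,t}^{\top}\mW_{1,t}$, and then to invoke uniqueness of positive semidefinite matrix roots. Since the weights are balanced at $t=0$ and follow the gradient flow dynamics of \autoref{eq:gradient_flow}, \autoref{lem:balancedness_preserved} gives, for every $t\geq 0$ and every $1\leq l\leq L-1$, the identity $\mW_{l+1,t}^{\top}\mW_{l+1,t} = \mW_{l,t}\mW_{l,t}^{\top}$, because the right-hand side of the lemma vanishes under balanced initialization. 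Fix $t$ from now on and, for brevity, write $W_l$ for $\mW_{l,t}$.

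The core step is to prove by downward induction on $k\in\{1,\dots,L\}$ that
\[
(W_L W_{L-1}\cdots W_k)^{\top}(W_L W_{L-1}\cdots W_k) = (W_k^{\top}W_k)^{\,L-k+1}.
\]
The base case $k=L$ is immediate. For the inductive step, assuming the claim for $k+1$, peel off $W_k$ from both ends and substitute the inductive hypothesis:
\[
(W_L\cdots W_k)^{\top}(W_L\cdots W_k) = W_k^{\top}\big[(W_L\cdots W_{k+1})^{\top}(W_L\cdots W_{k+1})\big]W_k = W_k^{\top}\,(W_{k+1}^{\top}W_{k+1})^{\,L-k}\,W_k .
\]
Replacing $W_{k+1}^{\top}W_{k+1}$ by $W_k W_k^{\top}$ via balancedness and using the elementary identity $W^{\top}(WW^{\top})^{j}W = (W^{\top}W)^{j+1}$ (a one-line induction on $j$, valid for rectangular $W$) turns the right-hand side into $(W_k^{\top}W_k)^{\,L-k+1}$, completing the induction. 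Taking $k=1$ and recalling that for a linear network $\mA_{f,t} = \mJ_{f_t}^{\top}\mJ_{f_t}$ with $\mJ_{f_t} = W_L\cdots W_1$ yields $\mA_{f,t} = (W_1^{\top}W_1)^{L}$.

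To finish, observe that $W_1^{\top}W_1$ is symmetric positive semidefinite, hence has a unique symmetric positive semidefinite $L$-th root; since $\mA_{f,t} = (W_1^{\top}W_1)^L$ and $(\mA_{f,t})^{1/L}$ is by definition exactly that root (read off from the spectral decomposition of $\mA_{f,t}$), we conclude $\mW_{1,t}^{\top}\mW_{1,t} = (\mA_{f,t})^{1/L}$. I do not anticipate a genuine obstacle here: once balancedness is in hand the argument is essentially bookkeeping. The only points deserving care are (i) stating the matrix $L$-th root precisely so the last step is a true equality (uniqueness of PSD roots) and not merely a spectral coincidence, and (ii) the harmless verification of $W^{\top}(WW^{\top})^{j}W = (W^{\top}W)^{j+1}$ for non-square $W$. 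Depth enters exactly through the exponent $L-k+1$ accumulated across the induction, which is what produces the $1/L$ exponent in the NFA and hence its depth dependence.
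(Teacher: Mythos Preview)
Your proposal is correct and follows essentially the same route as the paper: invoke \autoref{lem:balancedness_preserved} to keep balancedness for all $t$, then use the identity $W_l^{\top}(W_{l+1}^{\top}W_{l+1})^{k}W_l=(W_l^{\top}W_l)^{k+1}$ (the paper's \eqref{eq:1_over_l_helper}) to collapse $\mJ_{f_t}^{\top}\mJ_{f_t}$ to $(W_1^{\top}W_1)^L$ and take the $L$-th root. Your explicit downward induction and your remark on uniqueness of the PSD $L$-th root are slightly more careful than the paper's ``repeatedly apply'' phrasing, but the argument is the same.
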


\begin{proof}
    By first applying \autoref{lem:balancedness_preserved} and expanding the brackets, we have, for all $k \geq 1$, $t\geq 0$ and $1 \leq l < n$:
    \begin{align}
                \mW_{l,t}^{\top}(\mW_{l+1,t}^{\top}\mW_{l+1,t})^{k}\mW_{l,t} &= \mW_{l,t}^{\top}(\mW_{l,t}\mW_{l,t}^{\top})^{k}\mW_{l,t} \nonumber \\
                &= (\mW_{l,t}^{\top}\mW_{l,t})^{k+1}.
    \label{eq:1_over_l_helper}
    \end{align}
    As $f_t$ is linear, there holds $\mJ_{f_t}(\vx) = \mW_{L,t}\mW_{L-1,t}\cdot \ldots \cdot \mW_{1,t}$. We may repeatedly apply the equality of \autoref{eq:1_over_l_helper} to see that\footnote{For brevity, we include the explicit steps in the Appendix.}
        \begin{align}
         \left ( \mJ_{f_t}(\vx)^{\top}\mJ_{f_t}(\vx) \right )^{1/L} &=
          \left ( (\mW_{1,t}^{\top}\mW_{1,t})^{L} \right )^{1/L} \nonumber \\
         &=\mW_{1,t}^{\top}\mW_{1,t}. \label{eq:NFA_balanced_result}
        \end{align}
\end{proof}

Therefore, for deeper linear networks, the NFA holds with $\alpha = 1/L$, where $L$ is the number of layers in the model.

\textbf{Remark:} We assumed that the network output was multivariate. The assumption that the layers are all balanced means that in the case of a univariate output, all of the layers would have to be rank 1 both at initialization and throughout training. Indeed, it has been shown that the weight matrices in deep linear networks converge to be rank 1 for classification tasks \citep{ji_gradient_2018}.

\subsection{Removing the balancedness assumption}

For unbalanced initializations, \autoref{lem:balancedness_preserved} states that the weights in adjacent layers shall remain unbalanced through training. By introducing weight decay into the gradient flow dynamics, it can be shown that the weights in adjacent layers will become asymptotically balanced \citep{kobayashi_weight_2024}. We recall the gradient flow dynamics with weight decay:

\begin{equation}
    \frac{\partial\mW_{l, t}}{\partial t} = -\frac{\partial \mathcal{L}(\vtheta_t)}{\partial \mW_{l, t}} - \lambda\mW_{l, t},\label{eq:gradient_flow_wd}
\end{equation}
where $\lambda > 0$ is the weight decay parameter.

We also recall the following lemma which we will use to prove a similar result to \autoref{thm:1_over_l} for unbalanced initialization.

\begin{lemma}[\citet{kobayashi_weight_2024}]\label{lem:exp_balanced}
    Suppose $\mW_{l},\ \mW_{l+1}$ are the weight matrices of two adjacent layers of a neural network, that has a loss function differentiable with respect to $\hat{\mW}_{l+1, l} :=\mW_{l+1}\cdot\mW_{l}$. Suppose that the layers follow the gradient flow dynamics given by \autoref{eq:gradient_flow_wd} for $\lambda > 0$, then $\mW_{l+1, t}^{\top}\mW_{l+1, t} - \mW_{l, t}\mW_{l, t}^{\top}$ converges exponentially quickly to zero. In particular, $\mW_{l, t}\mW_{l, t}^{\top} - \mW_{l+1, t}^{\top}\mW_{l+1, t} = e^{-2\lambda t}\mC_{l}$ where $\mC_l = \mW_{l, 0}\mW_{l, 0}^{\top} - \mW_{l+1, 0}^{\top}\mW_{l+1, 0}$.
\end{lemma}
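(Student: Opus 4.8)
The plan is to reduce the claim to a matrix-valued linear ODE for the difference $\Delta_t := \mW_{l,t}\mW_{l,t}^{\top} - \mW_{l+1,t}^{\top}\mW_{l+1,t}$, following the weight-decay-free balancedness argument of \citet{arora_optimization_2018} (recalled in \autoref{lem:balancedness_preserved}) but carrying the extra $-\lambda\mW$ terms through the computation.

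First I would unpack the gradient structure implied by the hypothesis that $\mathcal{L}$ is differentiable with respect to the product $\hat{\mW}_{l+1, l} = \mW_{l+1}\mW_{l}$. Holding all other weights fixed and viewing $\mathcal{L}$ as a function of $P := \mW_{l+1}\mW_{l}$ with $G_t := \partial\mathcal{L}/\partial P$ evaluated along the trajectory, the chain rule gives $\partial\mathcal{L}/\partial\mW_{l,t} = \mW_{l+1,t}^{\top}G_t$ and $\partial\mathcal{L}/\partial\mW_{l+1,t} = G_t\mW_{l,t}^{\top}$. Substituting into the weight-decay gradient flow \autoref{eq:gradient_flow_wd} yields $\dot{\mW}_{l,t} = -\mW_{l+1,t}^{\top}G_t - \lambda\mW_{l,t}$ and $\dot{\mW}_{l+1,t} = -G_t\mW_{l,t}^{\top} - \lambda\mW_{l+1,t}$.

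Next I would differentiate $\mW_{l,t}\mW_{l,t}^{\top}$ and $\mW_{l+1,t}^{\top}\mW_{l+1,t}$ in time via the product rule and the two expressions above. A direct computation shows that each derivative contains the same cross term $-\mW_{l+1,t}^{\top}G_t\mW_{l,t}^{\top} - \mW_{l,t}G_t^{\top}\mW_{l+1,t}$ arising from the loss gradient, together with a term equal to $-2\lambda$ times the matrix being differentiated. Subtracting, the loss-gradient contributions cancel and one is left with $\dot{\Delta}_t = -2\lambda\,\Delta_t$. Integrating this linear ODE gives $\Delta_t = e^{-2\lambda t}\Delta_0 = e^{-2\lambda t}\mC_l$, which is the stated identity; since $\lambda > 0$ this is exactly the claimed exponential convergence to zero.

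The computation itself is routine; the step I would be most careful about is the first one, namely justifying the factored form of the two partial gradients. For a deep network this requires that no nonlinearity sits between layers $l$ and $l+1$, so that the two weight matrices interact only through their product $\mW_{l+1}\mW_{l}$ — which is precisely what the differentiability-in-$\hat{\mW}_{l+1, l}$ hypothesis encodes, and which holds automatically in the deep linear setting of \autoref{sec:nfa_proof}. It is also worth noting that the argument is pathwise in $t$ and that $G_t$ drops out entirely from the ODE for $\Delta_t$, so no regularity of $G_t$ beyond what makes the gradient flow well-defined is needed.
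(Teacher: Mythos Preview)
Your argument is correct: the factored form of the two partial gradients follows from the hypothesis that $\mathcal{L}$ depends on $\mW_l,\mW_{l+1}$ only through $\hat{\mW}_{l+1,l}$, the product-rule computation is accurate, the loss-gradient cross terms cancel, and the resulting ODE $\dot{\Delta}_t=-2\lambda\Delta_t$ integrates to the claimed identity. Note that the paper does not supply its own proof of this lemma---it is quoted from \citet{kobayashi_weight_2024}---so there is nothing to compare against; your derivation is exactly the standard one underlying that reference.
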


Letting $c_{\max} := \max_{l} \|\mC_{l}\|_{F}$ in  \autoref{lem:exp_balanced}, we can prove the following theorem, the proof of which is included in the Appendix.

\begin{theorem}[NFA for deep linear networks, without balanced initialization]\label{thm:1_over_l_unbalanced}
    For time $t \geq 0$, let $f_{t}(\vx) = \mW_{L,t}\mW_{L-1,t}\cdots \mW_{1,t}\vx$ for $\vx \in \R^{d}$.
    Suppose that $\mW_{1,t},\mW_{2,t},\ldots,\mW_{L,t}$ follow the gradient flow dynamics given by \autoref{eq:gradient_flow_wd} for $\lambda > 0$. Defining $c_{\max}$ as above, at any time $t>0$, there holds
    \begin{equation}
         \| \mA_{f,t} - \left ( \mW_{1,t}^{\top}\mW_{1,t} \right )^{L}\|_{F} = \mathcal{O}(c_{\max}e^{-2\lambda t}).
    \end{equation}
    where $\mA_{f, t} = \mJ_{f_t}^{\top}\mJ_{f_t}$ and $\lambda > 0$ is the weight decay constant.
\end{theorem}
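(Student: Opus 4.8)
The plan is to re-run the telescoping computation in the proof of \autoref{thm:1_over_l}, but carrying an error term that decays like $e^{-2\lambda t}$. Fix $t>0$, abbreviate $\epsilon := e^{-2\lambda t} \in (0,1]$, and suppress the $t$ subscripts on all weight matrices. By \autoref{lem:exp_balanced}, for each $l \in \{1,\dots,L-1\}$ we have $\mW_{l+1}^{\top}\mW_{l+1} = \mW_l\mW_l^{\top} - \epsilon\mC_l$ with $\|\mC_l\|_F \le c_{\max}$. Introduce the partial products $\mP_k := \mW_k^{\top}\mW_{k+1}^{\top}\cdots\mW_L^{\top}\mW_L\cdots\mW_{k+1}\mW_k$ for $k=1,\dots,L$, so that $\mP_L = \mW_L^{\top}\mW_L$, $\mP_1 = \mJ_f^{\top}\mJ_f = \Af$, and $\mP_{k-1} = \mW_{k-1}^{\top}\mP_k\mW_{k-1}$.

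I would prove by downward induction on $k$ that $\mP_k = (\mW_k^{\top}\mW_k)^{L-k+1} + \epsilon\mR_k$ for some matrix $\mR_k$ with $\|\mR_k\|_F \le \gamma_k\, c_{\max}$, where $\gamma_k$ depends only on $L$ and on a bound $B := \max_l \|\mW_l\|_2$ on the operator norms of the current weights. The base case $k=L$ holds with $\mR_L = 0$. For the inductive step, with $m = L-k+1$, substitute $\mW_k^{\top}\mW_k = \mW_{k-1}\mW_{k-1}^{\top} - \epsilon\mC_{k-1}$ into $(\mW_k^{\top}\mW_k)^{m}$ and expand: a word-counting estimate gives $(\mW_{k-1}\mW_{k-1}^{\top} - \epsilon\mC_{k-1})^m = (\mW_{k-1}\mW_{k-1}^{\top})^m + \epsilon\mS_{k-1}$, where, using $\epsilon\le 1$, one has $\|\mS_{k-1}\|_F \le (B^2 + c_{\max})^m - B^{2m}$, a quantity that is proportional to $c_{\max}$ (and vanishes when the initialization is balanced). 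Then the purely algebraic identity $\mW_{k-1}^{\top}(\mW_{k-1}\mW_{k-1}^{\top})^m\mW_{k-1} = (\mW_{k-1}^{\top}\mW_{k-1})^{m+1}$ — the second equality in \autoref{eq:1_over_l_helper}, which requires no balancedness — yields $\mP_{k-1} = (\mW_{k-1}^{\top}\mW_{k-1})^{m+1} + \epsilon\mR_{k-1}$ with $\mR_{k-1} = \mW_{k-1}^{\top}(\mS_{k-1} + \mR_k)\mW_{k-1}$ and $\|\mR_{k-1}\|_F \le B^2(\|\mS_{k-1}\|_F + \gamma_k c_{\max}) =: \gamma_{k-1}c_{\max}$. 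Taking $k=1$ gives $\Af = (\mW_1^{\top}\mW_1)^L + \epsilon\mR_1$, hence $\|\Af - (\mW_{1}^{\top}\mW_{1})^{L}\|_F = \epsilon\|\mR_1\|_F \le \gamma_1 c_{\max}\, e^{-2\lambda t} = \mathcal{O}(c_{\max}e^{-2\lambda t})$.

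The main obstacle is bookkeeping rather than a conceptual difficulty: one must track how the error matrix $\mR_k$ propagates through the two operations at each step — conjugation by $\mW_{k-1}$, and substituting the \emph{approximate} balancedness relation into an $m$-th matrix power — and in particular bound the higher-order-in-$\epsilon$ remainder of that power uniformly (this is where $\epsilon \le 1$, i.e.\ $t \ge 0$, is used to pull out a single factor of $\epsilon$). This forces the implied constant to depend on a bound $B$ on the weight operator norms along the trajectory; I would either state this as a standing assumption — it holds, in particular, whenever training converges, which weight decay promotes — or absorb $\sup_{t\ge 0}\max_l\|\mW_{l,t}\|_2$ into the $\mathcal{O}(\cdot)$ as $t \to \infty$. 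Everything else — existence of the gradient-flow trajectory, the $e^{-2\lambda t}$ rate of imbalance decay, and the algebraic sandwich identity — is inherited from \autoref{lem:exp_balanced}, the proof of \autoref{thm:1_over_l}, and elementary matrix algebra.
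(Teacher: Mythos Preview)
Your proposal is correct and follows essentially the same route as the paper --- telescope via the sandwich identity, substitute the approximate balancedness from \autoref{lem:exp_balanced}, expand the resulting matrix power, and collect an $e^{-2\lambda t}$-small remainder; the paper's intermediate quantities $\mD_{l}:=\mW_1^\top\cdots\mW_{l-1}^\top(\mW_l^\top\mW_l)^{L-l+1}\mW_{l-1}\cdots\mW_1$ are just a reindexing of your downward induction on $\mP_k$. The one point you flag as a standing assumption --- a uniform-in-$t$ bound on the weight norms --- the paper actually proves (\autoref{lem:norm_bound}): the regularised loss $\mathcal{L}(\vtheta)+\tfrac{\lambda}{2}\sum_l\|\mW_l\|_F^2$ is non-increasing under the flow \eqref{eq:gradient_flow_wd}, which together with $\mathcal{L}\ge\mathcal{L}_{\mathrm{low}}$ immediately bounds every $\|\mW_{l,t}\|_F$ by a constant depending only on the initial data.
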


According to \citet{wihler_holder_2009}, for two $d \times d$ positive semi-definite matrices $\mX, \mY$, 
\begin{equation}
\|\mX^{1/L} - \mY^{1/L}\|_{F} \leq d^{(L-1)/2L}\|\mX-\mY\|_{F}^{1/L}.\label{eq:Wihler_bound}
\end{equation}
This result allows us to quantify directly the gap between the neural feature matrix $\mW_1^\top \mW_1$ and the $L$-th principal square root of the AGOP matrix.

\begin{corollary}\label{cor:1_over_n_unbalanced_form}
For time $t \geq 0$, let $f_{t}(\vx) = \mW_{L,t}\mW_{L-1,t}\cdots \mW_{1,t}\vx$ for $\vx \in \R^{d}$.
    Suppose that $\mW_{1,t},\mW_{2,t},\ldots,\mW_{L,t}$ follow the gradient flow dynamics given by \autoref{eq:gradient_flow_wd} for $\lambda > 0$. Defining $c_{\max}$ as above, at any time $t>0$, there holds
    \begin{equation}
         \| (\mA_{f,t})^{1/L} - \mW_{1,t}^{\top}\mW_{1,t} \|_{F} = \mathcal{O}(c_{\max}e^{-2\lambda t/L}).
    \end{equation}
    where $\mA_{f, t} = \mJ_{f_t}^{\top}\mJ_{f_t}$ and $\lambda > 0$ is the weight decay constant. 
\end{corollary}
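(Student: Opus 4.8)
The plan is to obtain \autoref{cor:1_over_n_unbalanced_form} as a direct consequence of \autoref{thm:1_over_l_unbalanced} together with the H\"older-continuity estimate \eqref{eq:Wihler_bound} for the matrix $L$-th root. First I would record that both matrices appearing in \autoref{thm:1_over_l_unbalanced} are symmetric positive semi-definite $d \times d$ matrices: the AGOP $\mA_{f,t} = \mJ_{f_t}^{\top}\mJ_{f_t}$ is of the form $\mB^{\top}\mB$, hence PSD, and $(\mW_{1,t}^{\top}\mW_{1,t})^{L}$ is an integer power of the PSD matrix $\mW_{1,t}^{\top}\mW_{1,t}$, hence also PSD. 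This makes \eqref{eq:Wihler_bound} applicable with $\mX = \mA_{f,t}$ and $\mY = (\mW_{1,t}^{\top}\mW_{1,t})^{L}$.

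Next I would identify the two $1/L$-th roots. For a PSD matrix the $1/L$ power is defined through the spectral decomposition (with $0^{1/L} = 0$) and is the unique PSD matrix whose $L$-th power equals the original one. Consequently $\mX^{1/L} = (\mA_{f,t})^{1/L}$ is exactly the quantity in the corollary, while $\mY^{1/L} = \big((\mW_{1,t}^{\top}\mW_{1,t})^{L}\big)^{1/L} = \mW_{1,t}^{\top}\mW_{1,t}$, since $\mW_{1,t}^{\top}\mW_{1,t}$ is itself PSD and raising to the $L$-th power followed by the $1/L$-th power is the identity map on PSD matrices. Substituting these into \eqref{eq:Wihler_bound} yields
\[
\| (\mA_{f,t})^{1/L} - \mW_{1,t}^{\top}\mW_{1,t}\|_{F} \;\leq\; d^{(L-1)/2L}\,\big\| \mA_{f,t} - (\mW_{1,t}^{\top}\mW_{1,t})^{L} \big\|_{F}^{1/L}.
\]

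Finally I would plug in the bound of \autoref{thm:1_over_l_unbalanced}, namely $\| \mA_{f,t} - (\mW_{1,t}^{\top}\mW_{1,t})^{L}\|_{F} = \mathcal{O}(c_{\max}e^{-2\lambda t})$, into the right-hand side. Raising to the $1/L$-th power turns the factor $e^{-2\lambda t}$ into $e^{-2\lambda t/L}$ and produces a $t$-independent prefactor that is a constant multiple of $d^{(L-1)/2L}c_{\max}^{1/L}$; this prefactor is absorbed into the $\mathcal{O}(\cdot)$ (for $c_{\max} \geq 1$ one may additionally bound $c_{\max}^{1/L} \leq c_{\max}$ to match the stated form exactly). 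This gives $\| (\mA_{f,t})^{1/L} - \mW_{1,t}^{\top}\mW_{1,t}\|_{F} = \mathcal{O}(c_{\max}e^{-2\lambda t/L})$, as claimed.

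I do not expect a genuine obstacle: the statement is a straightforward corollary. The only points needing care are (i) checking positive semi-definiteness so that \eqref{eq:Wihler_bound} applies and the identity $\big((\mW_{1,t}^{\top}\mW_{1,t})^{L}\big)^{1/L} = \mW_{1,t}^{\top}\mW_{1,t}$ is legitimate even when the matrices are singular, and (ii) the bookkeeping of the prefactor — in particular noting that the rate in the exponent is degraded by a factor $1/L$ in exchange for a worse dimension- and $c_{\max}$-dependent constant, which is precisely the cost of applying a H\"older- rather than Lipschitz-type estimate to the map $\mX \mapsto \mX^{1/L}$, which fails to be Lipschitz near singular matrices for $L \geq 2$.
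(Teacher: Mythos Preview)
Your proposal is correct and matches the paper's approach exactly: the paper introduces the Wihler H\"older-continuity bound \eqref{eq:Wihler_bound} immediately before the corollary and presents the corollary as a direct consequence of applying it to \autoref{thm:1_over_l_unbalanced}, which is precisely what you do. Your extra care in verifying positive semi-definiteness and in noting the $c_{\max}^{1/L}$-versus-$c_{\max}$ discrepancy in the prefactor is more detail than the paper provides, but the argument is the same.
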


\textbf{Remark:} Note that \autoref{lem:exp_balanced} holds for the linear part of networks of the form
\begin{equation}
    f(\vx) = \va^{\top}\phi(\mW_{L}\mW_{L-1}\cdot\ldots\cdot\mW_{1}\vx + \vb_1) + b_2
\end{equation}
where $\phi$ is a differentiable activation (which can be seen as a classification head on top of the linear feature extraction layers). We consider this type of architecture in our numerical experiments (although with a ReLU activation function which that is not differentiable at the origin). Specifically, the results is applicable for the evolution of $\tilde{f}(\vx) = \mW_{L} \cdot \ldots \cdot \mW_{1}\vx + \vb_1$.

\section{The NFA for nonlinear networks}
\label{sec:NFA_nonlinear}

While previous section showed that the NFA holds for deep linear neural networks under suitable assumption on the training process, we show now that there exist functions and architectures such that the NFA does not hold, even when the network function $f$ exactly matches the true function $f^{*}$.%

\begin{example}
Suppose that $f^{*}:\R^{2} \rightarrow \R$ is defined by $f^{*}(\vx) = [x_1]_{+} + [x_2]_{+}$ and that we have some data set $\{(\vx_{i}, y_{i})\}_{1\leq i \leq N}$, with the $\vx_i$s drawn from some distribution $\mX$ that has equal probability for each of the four quadrants (e.g. $U([-1,1]^{2})$).  We observe that $f$ can be expressed exactly by a one-hidden-layer bias-free neural network with ReLU activation in the hidden layer: 
$$f^*(x) = \va^\top \phi(\mW \vx),$$
with 
\begin{equation*}
    \mW = \begin{pmatrix}
        1 & 0 \\
        0 & 1
    \end{pmatrix}; \quad \va = \begin{pmatrix}
        1 \\
        1
    \end{pmatrix}; \quad \mW^\top\mW = \begin{pmatrix}
        1 &0\\
        0 &1
    \end{pmatrix}.
\end{equation*}
Moreover, since this function has gradient discontinuities on the lines $x_1 = 0$ and $x_2 = 0$, the preactivations in the hidden layer will also have to align with these directions so that this is the only one-hidden-layer bias-free representation of this function up to rescaling of the rows of $\mW$ and corresponding entries of $\va$. For any $\vx \in \R^2$, there holds
\begin{align*}
    \nabla f(\vx) &= \begin{pmatrix}
        \mathds{1}_{\{x_1 > 1\}} \\
        \mathds{1}_{\{x_2 > 1\}}
    \end{pmatrix};\\
    \nabla f(\vx)\nabla f(\vx)^{\top} &= \begin{pmatrix}
        \mathds{1}_{\{x_1 > 1\}} & \mathds{1}_{\{x_1 > 1;\ x_2 > 1\}}\\
        \mathds{1}_{\{x_1 > 1;\ x_2 > 1\}} & \mathds{1}_{\{x_2 > 1\}}
    \end{pmatrix},
\end{align*}
where we use $\mathds{1}$ to denote an indicator function. By our distributional assumption, we have
\begin{equation*}
    \Ef := \E_{\vx \sim \mX} \left [ \nabla f(\vx)\nabla f(\vx)^{\top} \right ] = \frac{1}{4}\begin{pmatrix}
        2 & 1\\
        1 & 2
    \end{pmatrix}.
\end{equation*}
Assuming $N$ is large, we will have $\Af \approx \Ef$. In fact, by the strong law of large numbers, we have $\Af \rightarrow \Ef$ as $N \rightarrow \infty$. On the other hand, there is no power of $\alpha > 0$ such that $\WtW \propto (\Ef)^{\alpha}$, and so the NFA does not hold in this setting.

\end{example}

From this counterexample, we may deduce NFA does always hold for nonlinear networks (regardless of the value of $\alpha > 0$).

\paragraph{What about wider or deeper networks?} In the above example, we considered a narrow two-layer network which was not overparameterized. The question remains of what happens for wider or deeper networks. In the case of overparameterized two-layer neural networks, we can add zero-weight connections to find networks for which $f^{*}$ is interpolated and the NFA holds. For example, suppose that we have
\begin{equation*}
    \mW = \begin{pmatrix}
        1 & 0 \\
        0 & 1 \\
        1  & 1
    \end{pmatrix}; \quad \va = \begin{pmatrix}
        1 \\
        1 \\
        0
    \end{pmatrix}; \quad \mW^\top\mW = \begin{pmatrix}
        2 &1\\
        1 &2
    \end{pmatrix},
\end{equation*}

the NFA would hold exactly with $\alpha = 1$. Of course, the third neuron in the hidden layer would have no impact on the network output. By the universal approximation property it can show that any function $f^{*}:\R^{2} \rightarrow \R$ can be approximated by a sufficiently wide 2-layer network (with bias). 

\paragraph{The NFA and generalization. } We next show that alignment between $\mA_f$ and $\mA_{f^{*}}$ is neither necessary nor sufficient for $f$ to be a good fit of $f^{*}$.

Regarding the lack of sufficiency: setting $f(\vx) := f^{*}(\vx) + c$ for some constant $c$ gives  $\mA_f = \mA_{f^{*}}$, while $f$ is a very poor fit of $f^{*}$ for large values of $c$.

Regarding the lack of necessity: for $n \geq 1$, define $f_{n}^{*}:\R^{2} \rightarrow \R$ by
\begin{equation}
    f_{n}^{*}(\vx) =  \frac{1}{n}\cos(n^{2}x_1) + x_2,
\end{equation}
for $\vx = (x_1,\ x_2)$. Note that
\begin{align*}
    \nabla f_{n}^{*}(\vx) &= 
    \begin{pmatrix}
        -n\sin(n^{2}x_1)\\
        1
    \end{pmatrix} \\ 
    \nabla f_{n}^{*}(\vx) \nabla f_{n}^{*}(\vx)^{\top} &= 
    \begin{pmatrix}
        n^{2}\sin^{2}(n^{2}x_1) & -n\sin(n^{2}x_1)\\
        -n\sin(n^{2}x_1) &1
    \end{pmatrix}.
\end{align*}

Assuming that we sample data points such that the distribution of $x_1$ is symmetric around the origin, for example, $x_1 \sim U[-\pi,\ \pi]$, the off-diagonal terms are zero. In this instance, we have that
\begin{equation}
    \mA_{f_n^*} = 
    \begin{pmatrix}
        \mathcal{O}(n^{2}) & 0\\
        0 &1
    \end{pmatrix}.
\end{equation}
Suppose we set $f(\vx):= x_2$, then
\begin{equation}
    \mA_{f} = 
    \begin{pmatrix}
        0 & 0\\
        0 &1
    \end{pmatrix},
\end{equation}

and hence $\cos(\mA_f, \mA_{f^{*}_{n}}) \rightarrow 0$ as $n \rightarrow \infty$. Notice that $\E[|f(\vx) - f_{n}^{*}(\vx)|] < 1/n \rightarrow 0$ as $n \rightarrow \infty$. There is therefore no obvious implication between the NFA and model generalization.

\section{Numerical Experiments}
\label{sec:numerics}

\begin{figure*}
    \centering
    \begin{subfigure}{0.48\textwidth}
        \centering
        \includegraphics[width=\textwidth, height=0.5\textwidth]{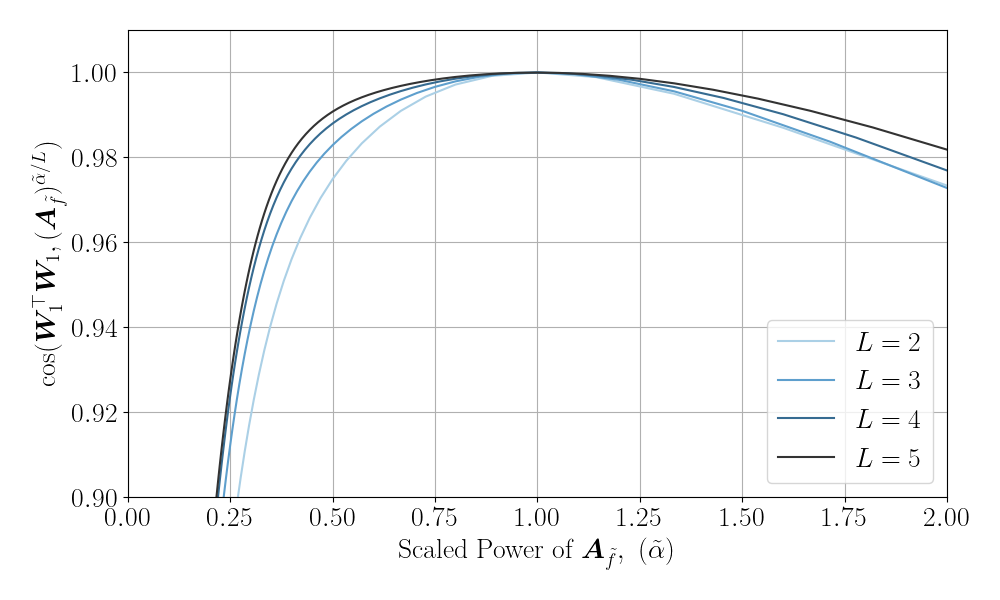}
        \caption{unbalanced initialization, $\lambda = 10^{-2}$}
    \end{subfigure}
    \hfill
    \begin{subfigure}{0.48\textwidth}
        \centering
        \includegraphics[width=\textwidth, height=0.5\textwidth]{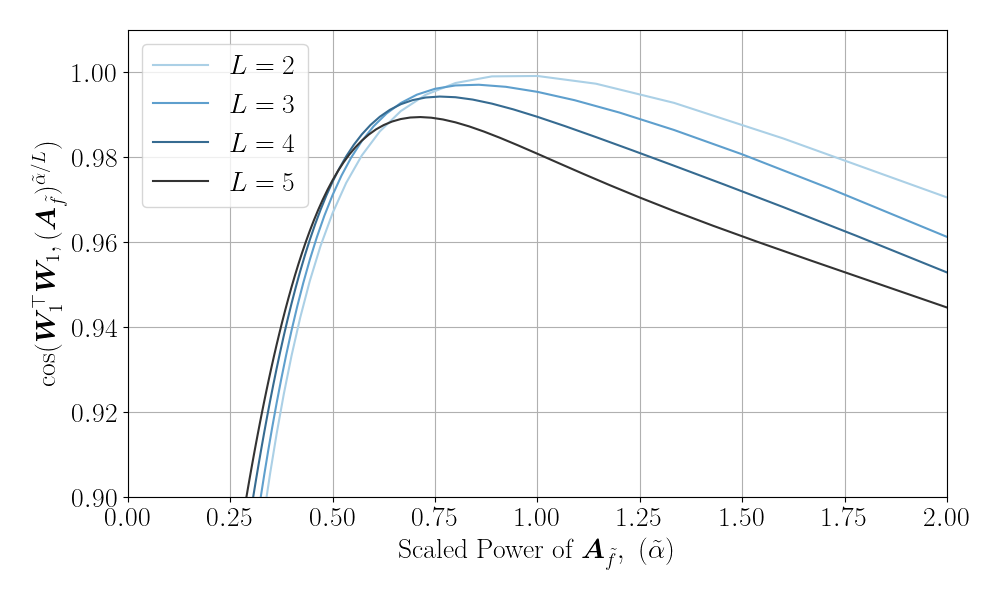}
        \caption{unbalanced initialization, $\lambda = 10^{-3}$}
    \end{subfigure}
    \begin{subfigure}{0.48\textwidth}
        \centering
        \includegraphics[width=\textwidth, height=0.5\textwidth]{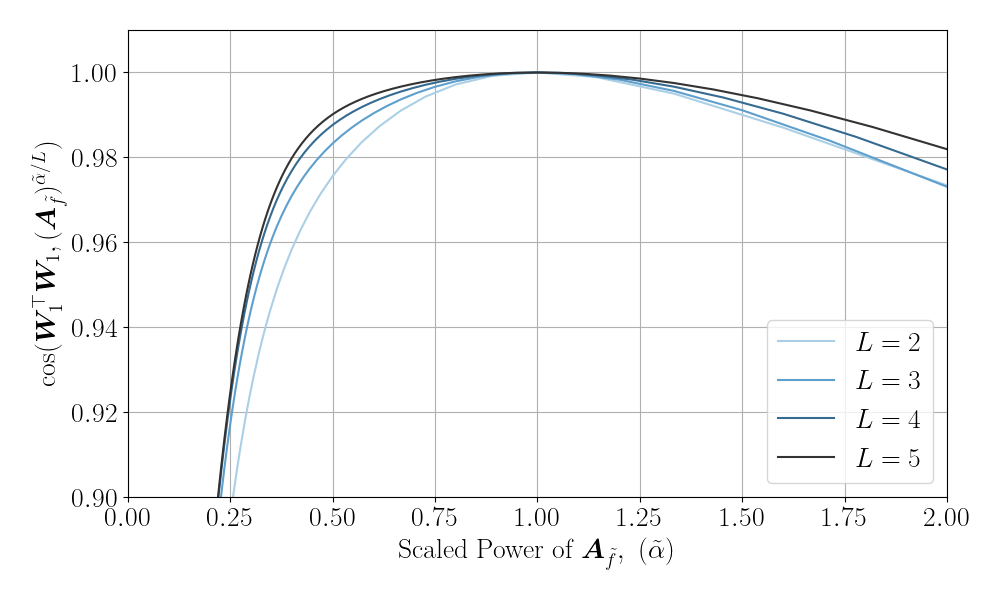}
        \caption{balanced initialization, $\lambda = 10^{-2}$}
    \end{subfigure}
    \hfill
    \begin{subfigure}{0.48\textwidth}
        \centering
        \includegraphics[width=\textwidth, height=0.5\textwidth]{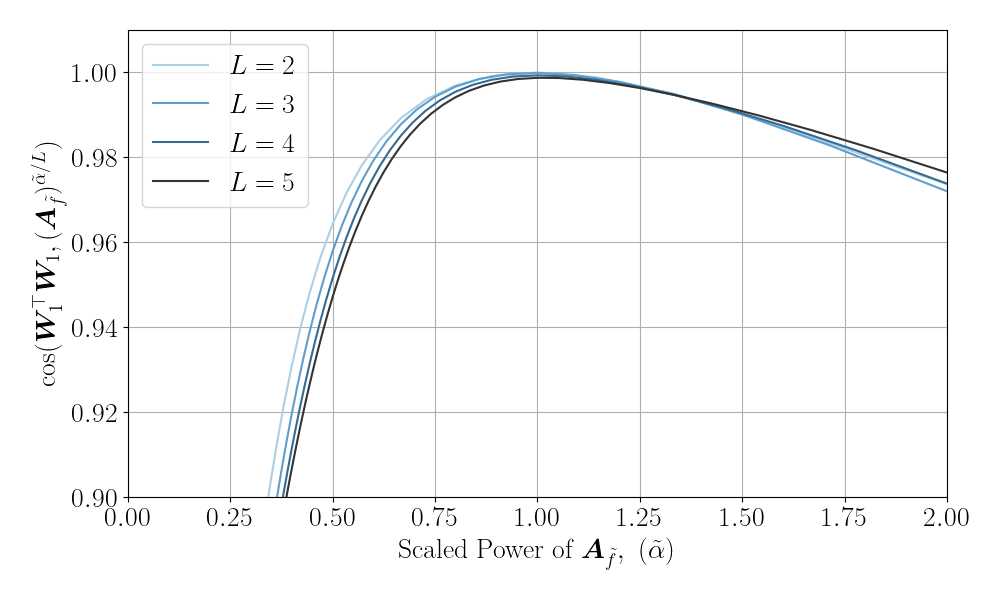}
        \caption{balanced initialization, $\lambda = 10^{-3}$}
    \end{subfigure}
    \caption{Illustration of the impact of initialization and weight decay ($\lambda$) on the alignment between $\WonetWone$ and $(\Af)^{1/L}$ at the end of training (SGD). A learning rate of $\eta = 10^{-4}$ was used.}
    \label{fig:scaled_alpha_plots}
\end{figure*}

We now conduct numerical experiments to verify the claims in \autoref{sec:nfa_proof}, as well as exploring further the low-dimensional structure resulting from the NFA when approximating low-rank functions.  Across this section, we rely on the data generation mechanisms proposed in \citet{parkinson_relu_2025}. 

\paragraph{Data generation.}
We consider low-rank target functions of the form
\begin{equation*}
    f^{*}(\vx) = \va^{\top}g(\mA\vx + \vb_1),\label{eq:experiment_form}
\end{equation*}
for some matrix $\mA \in \R^{r \times d}$, where $g$ is some link function, and for input dimension $d = 20$. In this section, we let $g : x \mapsto [x]_{+}$ be the ReLU function applied elementwise and $r = 5$; results associated with $r \in \{2,\ 20\}$, and other link functions are presented in the appendices. The datapoints are generated according to $\vx_i \sim U([-1/2, 1/2]^{20})$ for all $i$, and $y_i = f^*(\vx_i)$ (see also the appendices for additional results with label noise). Unless stated otherwise, we use a dataset size of 2048 points. For the stochastic optimization methods, we used a batch size of 64.

\subsection{Validating our theoretical results}

Note that the theoretical results derived in \autoref{sec:nfa_proof} were obtained under the assumption that model training follows a gradient flow dynamics. In practice, this algorithm is discretized and possibly replaced by stochastic optimizations methods such as SGD with or without momentum, or Adam. This section aims to assess whether the findings of \autoref{sec:nfa_proof} still (at least approximately) hold in these settings.

\paragraph{Architectures considered.} Following \citet{parkinson_relu_2025}, we considered here deep linear neural networks with a single ReLU final layer\footnote{Our notation differs as we have $L$ linear layers prior to the nonlinearity rather than $L-1$}: 
\begin{equation}
    f(\vx) = \va^{\top}[\mW_{L}\mW_{L-1}\cdot\ldots\cdot\mW_{1}\vx + \vb_1]_{+} + b_2.\label{eq:architecture_for_experiments}
\end{equation}
This network structure allows approximating more general functions than linear ones, but results for deep linear neural networks are provided in the Appendix.
The number of layers is variable, but each hidden layer has width 64. To align with the theory of \autoref{sec:nfa_proof}, we consider the initial linear part of this network evolves through training, namely, $\tilde{f}(\vx) = \mW_{L}\mW_{L-1}\cdot\ldots\cdot\mW_{1}\vx + \vb_1$. The Jacobian of this function is given by $\mJ_{\tilde{f}} = \mW_{L}\mW_{L-1}\cdot\ldots\cdot\mW_{1}$ and we let $\AfTilde := \mJ_{\tilde{f}}^{\top}\mJ_{\tilde{f}}$.
To assess the validity of the NFA, and in accordance with \citet{radhakrishnan_mechanism_2024}, we calculate the cosine similarity, whose definition is recalled next, between $(\AfTilde)^\alpha$ and $\mW_{1}^{\top}\mW_{1}$ for various powers of $\alpha$.

\begin{definition}\label{def:cosine_sim}
    The cosine similarity between two matrices $\mM$ and $\mN$ is given by
    $\cos(\mM, \mN):= \Tr(\mM^{\top}\mN)/(\|\mM\|_{F} \cdot\|\mN\|_{F})$.
\end{definition}

\paragraph{Initialization schemes and training algorithms considered.} We consider both balanced and unbalanced initialization schemes. For the unbalanced initialization scheme, we use the default PyTorch initialization for linear layers\footnote{Each weight $w$ in the $l$th layer is initialized as $w \sim U(-1/\sqrt{d_l},\ 1/\sqrt{d_l})$ where $d_l$ is the in-degree of the $l$th layer}, while the balanced initialization scheme is described in Appendix.

We consider a variety of optimization algorithms for model training. In this section, we primarily include results for GD and for stochastic gradient descent (SGD), with and without momentum. Additional results for training networks with Adam \citep{kingma_adam_2014}, and more results for gradient descent, are included in Appendix. Following \citet{parkinson_relu_2025}, we train each model for $60,000$ epochs before reducing the learning rate by a factor of 10 and running an additional $100$ epochs. We provide a description of the algorithm hyperparameters in Appendix.

\paragraph{Results. } 
\autoref{fig:scaled_alpha_plots} displays the cosine similarity between the neural feature matrix $\WonetWone$ and the AGOP of the linear part of the model as described above, with respect to the NFA exponent $\alpha$. In order to display on the same plot curves obtained from networks with different numbers of layers, we rescale $\alpha$ by the number of linear layers. In other words, the $x$-axis of \autoref{fig:scaled_alpha_plots} is $\tilde{\alpha} := L\alpha$. According to \autoref{thm:1_over_l}, the cosine similarity should be the greatest when $\tilde{\alpha} = 1$, which corresponds to $\alpha = 1/L$, which is indeed the case for balanced initialization (regardless of the weight decay parameter value and despite the fact that the training algorithm is SGD instead of a mere gradient flow dynamics in \autoref{sec:nfa_proof}). Note also that, as long as the weight decay parameter $\lambda$ is sufficiently large, the cosine similarity is also maximal when $\tilde{\alpha} = 1$ for unbalanced initialization.

\begin{figure*}
    \centering
    \begin{subfigure}{0.48\textwidth}
        \centering
        \includegraphics[width=\textwidth, height=0.5\textwidth]{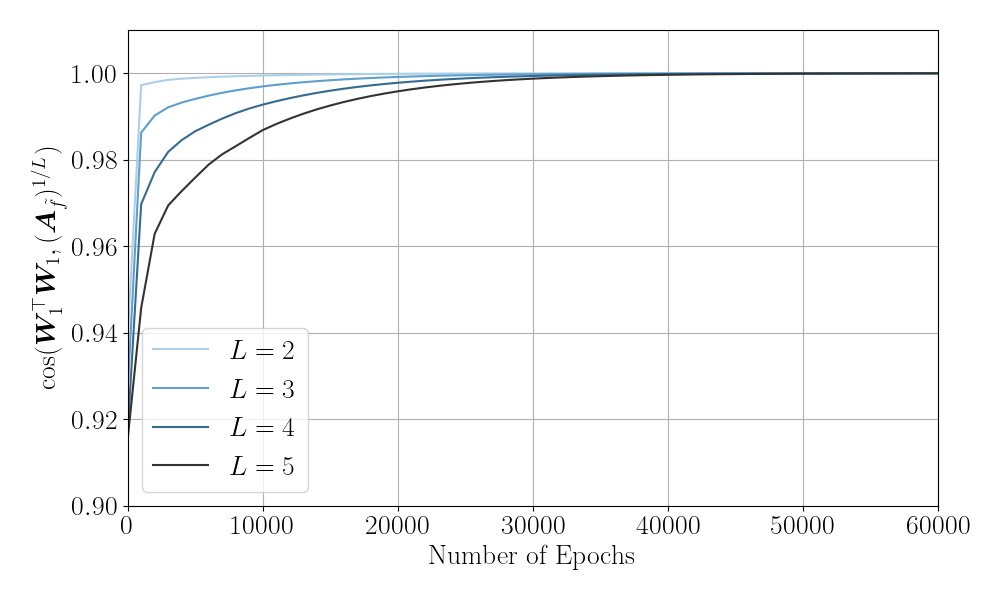}
        \caption{SGD, no momentum}
    \end{subfigure}
    \hfill
    \begin{subfigure}{0.48\textwidth}
        \centering
        \includegraphics[width=\textwidth, height=0.5\textwidth]{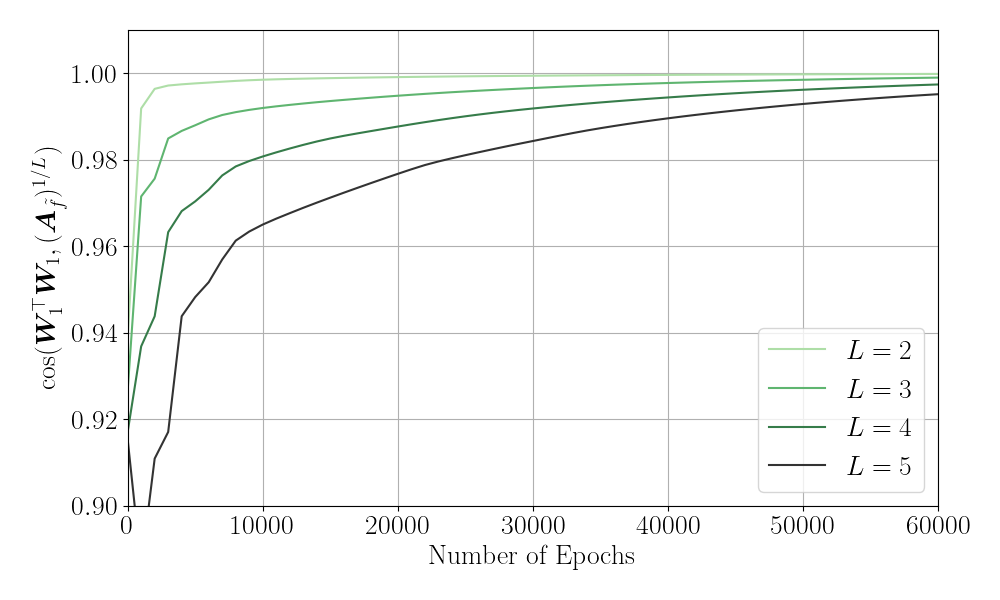}
        \caption{GD, no momentum}
    \end{subfigure}
    \begin{subfigure}{0.48\textwidth}
    \includegraphics[width=\textwidth, height=0.5\textwidth]{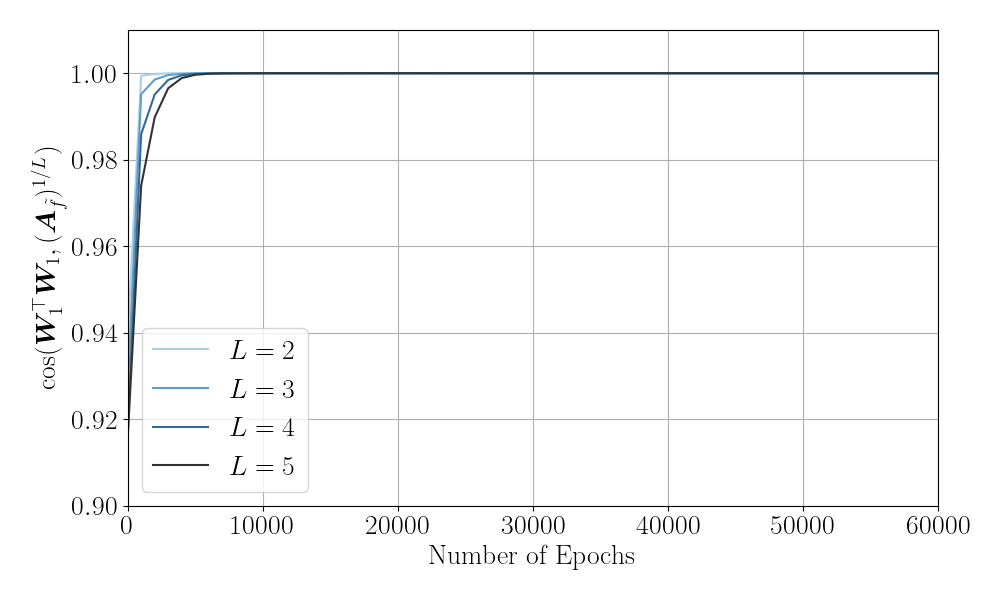}
        \caption{SGD, momentum}
    \end{subfigure}
    \hfill
    \begin{subfigure}{0.48\textwidth}
        \centering
        \includegraphics[width=\textwidth, height=0.5\textwidth]{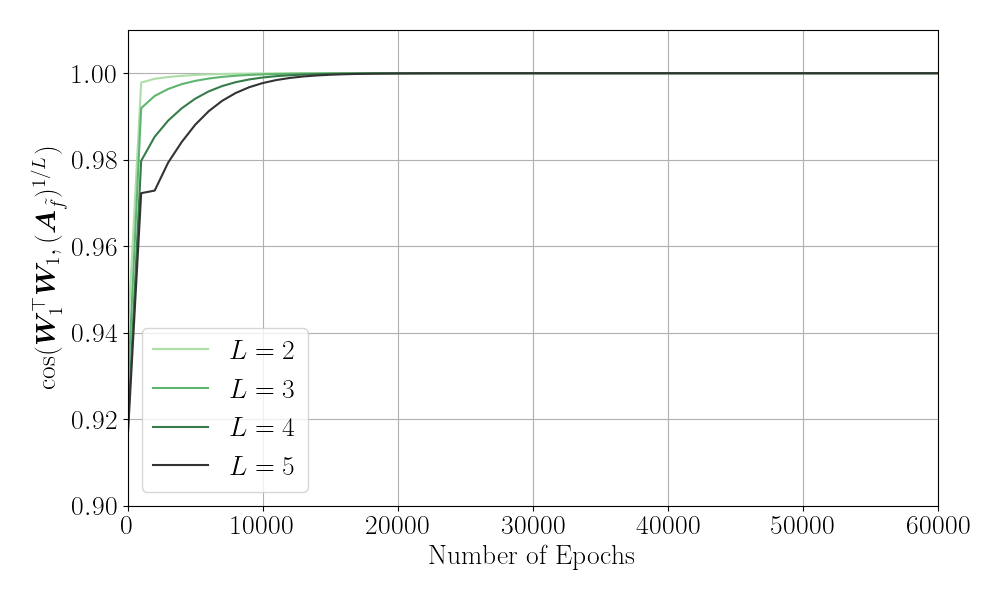}
        \caption{GD, momentum}
    \end{subfigure}
    \caption{Illustration of the impact of the optimization algorithm on the NFA, with weight decay ($\lambda = 10^{-2}$). When momentum is used, it is weighted by a parameter $\beta = 0.9$. The learning rates for SGD and GD are $\eta = 10^{-4}$ and $\eta = 10^{-3}$, respectively. Here we plot the first $60,000$ epochs before the learning rate decrease.}
    \label{fig:cosine_similarity_though_time}
\end{figure*}

\autoref{fig:cosine_similarity_though_time} compares the impact of the choice of the optimizer on the NFA, showing that the NFA holds in both settings, and that furthermore momentum increases the rate at which $\WonetWone$ and $(\AfTilde)^{1/L}$ align for both SGD and GD.

Finally, regarding depth dependency, note that all our experiments show that the rate at which $\WonetWone$ and $(\AfTilde)^{1/L}$ align is slower for deeper networks than for shallower networks.

\begin{figure*}
    \centering
    \begin{subfigure}{0.48\textwidth}
        \centering
        \includegraphics[width=\linewidth]{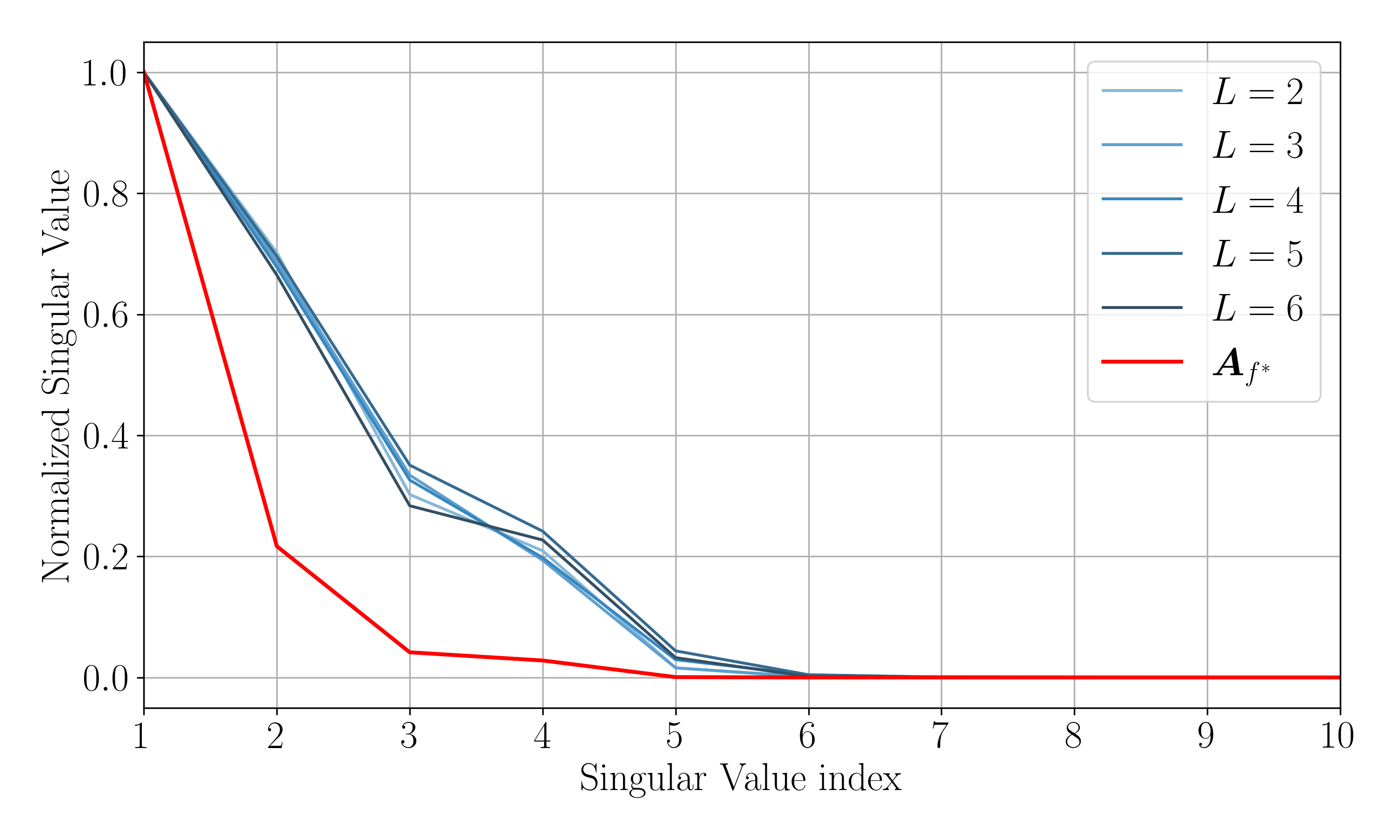}
        \caption{8192 Data points}
    \end{subfigure}
    \hfill
    \begin{subfigure}{0.48\textwidth}
        \centering
        \includegraphics[width=\linewidth]{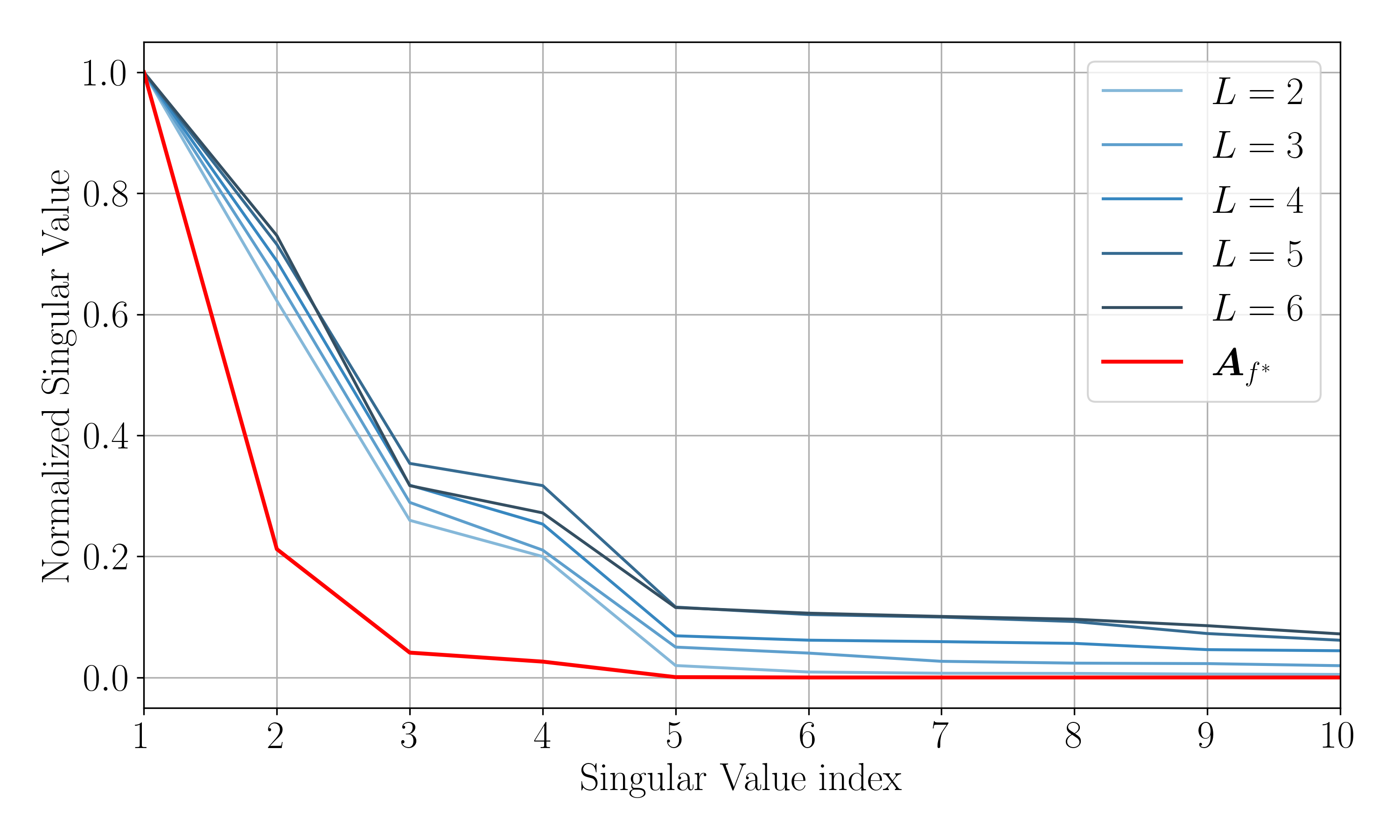}
        \caption{2048 Data points}
    \end{subfigure}
    \caption{Singular values decay for $\WonetWone$ vs $\AfStar$. Normalized Singular Values are computed by dividing by $\sigma_1$ for the respective matrix}
    \label{fig:nonlinear_network}
\end{figure*}

\subsection{Recovering the low-rank structure of the target function}

Note that, since our target function $f^{*}$ is low-rank, by \autoref{lem:low:rank:characterisation} $\nabla f^{*}(\vx) \in \mathcal{T}$ for all $\vx \in \R^{d}$, where $\mathcal{T}$ is the subspace of variation of $f^*$.
Therefore, $\nabla f^{*}(\vx)^{\top}\vx_{\perp} = 0$ for all $\vx \in \R^{d},\ \vx_{\perp} \in \mathcal{T}^{\perp}$ and hence  $(\vx_{\perp})^{\top}\mA_{f^{*}}\vx_{\perp} = 0$ for all $\vx_{\perp} \in \mathcal{T}^{\perp}$.
As such, the low-rank structure of $f^{*}$ is captured by $\mA_{f^{*}}$ as $\mathcal{\mathcal{T}}^{\perp} \subseteq \ker(\mA_{f^{*}})$\footnote{We use a ``subset'' notation here as the sample data points do not cover the entire domain}.
Proportionality between $\WonetWone$ and some power of $\Af$ implies that the two matrices must have the same rank. Hence, supposing that the NFA holds and $f$ has the same low-rank structure as the target function $f^*$, the first-layer weight matrix $\mW_1$ must have low-rank structure, which we verify here numerically in this section for a generic feedforward neural network architecture, with depth ranging from 2 to 5, width 64 and same initialization as above. We however have biases in each layer, as well as ReLU activations. 
The data generation mechanism is the same as in the previous section, but we also experiment with $N= 8192$ alongside $N= 2048$.
The model was trained with SGD with weight decay ($\lambda = 10^{-3}$) and momentum ($\beta = 0.9$).
\autoref{fig:nonlinear_network} shows the distribution of the singular values of $\mW_1$, for ReLU networks of various depths.
Note that the low-rank structure of the target function $f^{*}$ is indeed captured by the first-layer weight matrix for all  network depths considered, and that this becomes increasingly accurate as the number of data points increases, allowing a better learning of the low-rank structure in the target function.

\section{Conclusion}

We have shown that under gradient flow and balanced initialization, the NFA holds for deep linear networks with a depth-dependent exponent.
Furthermore, assuming that weight decay is applied, the NFA holds asymptotically regardless of initialization.
As a barrier to extending the NFA for linear networks to more general feedforward neural networks,  in \autoref{sec:NFA_nonlinear}, we show that there exist functions that can be expressed by a given architecture for which perfect proportionality is not attainable between some power of the AGOP and some power of $\WonetWone$ for that architecture.
In \autoref{sec:numerics}, we illustrate that  our theoretical results for gradient flow continue to hold when applying diverse training algorithms.
We also illustrate that in the case of nonlinear networks, $\WonetWone$ has the same low-rank structure as the AGOP of the target function, which indicates that the low-rank behaviour observed for linear networks may be extendable in the future to nonlinear ones.


\bibliography{references}

\newpage
\appendix

\section{Omitted Proofs}

\subsection{Derivation of \autoref{eq:NFA_balanced_result}}

Using the definition of $\mJ_{f_t}$, we obtain the first equality below,
\begin{align*}
         \left ( \mJ_{f_t}(\vx)^{\top}\mJ_{f_t}(\vx) \right )^{1/L} &=
         \left ( \mW_{1,t}^{\top}\cdots \mW_{L-1,t}^{\top} \mW_{L,t}^{\top}\mW_{L,t}\mW_{L-1,t}\cdots \mW_{1,t} \right )^{1/L} \\
         &=
         \left ( \mW_{1,t}^{\top}\cdots \mW_{L-2,t}^{\top}(\mW_{L-1,t}^{\top}\mW_{L-1,t})^{2}\mW_{L-2,t}\cdots \mW_{1,t} \right )^{1/L} \\
         &=
         \left ( \mW_{1,t}^{\top}\cdots \mW_{L-3,t}^{\top}(\mW_{L-2,t}^{\top}\mW_{L-2,t})^{3}\mW_{L-3,t}\cdots \mW_{1,t} \right )^{1/L} \\
         &\vdots\\
         &=\left ( \mW_{1,t}^{\top}(\mW_{2,t}^{\top}\mW_{2,t})^{L-1}\mW_{1,t} \right )^{1/L} \\
         &= \left ( (\mW_{1,t}^{\top}\mW_{1,t})^{L} \right )^{1/L}\\
         &= \mW_{1,t}^{\top}\mW_{1,t},
\end{align*}
where to obtain the remaining equalities, we repeatedly
apply \eqref{eq:1_over_l_helper}.

\subsection{Proof of \autoref{thm:1_over_l_unbalanced}}

We prove the following two lemmas to simplify the proof of this theorem. 

\begin{lemma}\label{lem:norm_bound}
    For $t \geq 0$, let $f_{t}(\vx) = \mW_{L,t}\mW_{L-1,t}\cdots \mW_{1,t}\vx$ for $\vx \in \R^{d}$. Suppose that $\mW_{1,t},\mW_{2,t},\ldots,\mW_{L,t}$ follow the gradient flow dynamics given by \autoref{eq:gradient_flow_wd} for $\lambda > 0$.
    Assume that a continuously differentiable loss function $\mathcal{L}$ is bounded below, i.e., there exists $\mathcal{L}_{low}$ such that $\mathcal{L}(\vtheta) \geq \mathcal{L}_{low}$ for all $\vtheta$.
    Then there exists some constant $C_F$ such that $\|\mW_{l, t}\|_{F} \leq C_F$ for all $l\in \{1,\ldots,L\}$ and for all $t \geq 0$.
\end{lemma}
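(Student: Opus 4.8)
The plan is to track the energy functional $E(t) := \mathcal{L}(\vtheta_t) + \frac{\lambda}{2}\sum_{l=1}^{L}\|\mW_{l,t}\|_F^2$ along the flow. First I would compute its time derivative. Writing $\dot{\mW}_{l,t} = -\partial\mathcal{L}/\partial\mW_{l,t} - \lambda\mW_{l,t}$ and using the chain rule,
\begin{align}
\frac{d}{dt}E(t) &= \sum_{l=1}^{L}\left\langle \frac{\partial\mathcal{L}}{\partial\mW_{l,t}}, \dot{\mW}_{l,t}\right\rangle + \lambda\sum_{l=1}^{L}\langle \mW_{l,t}, \dot{\mW}_{l,t}\rangle \nonumber\\
&= \sum_{l=1}^{L}\left\langle \frac{\partial\mathcal{L}}{\partial\mW_{l,t}} + \lambda\mW_{l,t},\, -\frac{\partial\mathcal{L}}{\partial\mW_{l,t}} - \lambda\mW_{l,t}\right\rangle = -\sum_{l=1}^{L}\left\|\frac{\partial\mathcal{L}}{\partial\mW_{l,t}} + \lambda\mW_{l,t}\right\|_F^2 \leq 0.
\end{align}
Hence $E$ is non-increasing, so $E(t) \leq E(0)$ for all $t \geq 0$.

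Next I would use the lower bound on the loss: since $\mathcal{L}(\vtheta_t) \geq \mathcal{L}_{low}$, we get $\frac{\lambda}{2}\sum_{l=1}^{L}\|\mW_{l,t}\|_F^2 = E(t) - \mathcal{L}(\vtheta_t) \leq E(0) - \mathcal{L}_{low}$. Therefore, for each $l$,
\begin{equation}
\|\mW_{l,t}\|_F^2 \leq \frac{2}{\lambda}\left(E(0) - \mathcal{L}_{low}\right) =: C_F^2,
\end{equation}
which is a finite constant independent of $t$ and $l$ (it depends only on the initialization, the weight decay parameter, and the loss lower bound). Setting $C_F := \sqrt{\tfrac{2}{\lambda}(E(0)-\mathcal{L}_{low})}$ completes the argument.

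The only subtlety — and the step I would be most careful about — is justifying that the trajectory $\{\mW_{l,t}\}$ is well-defined for all $t \geq 0$ (global existence of the gradient-flow solution), so that the monotonicity computation is valid on $[0,\infty)$ rather than merely on a maximal interval. Since $\mathcal{L}$ is continuously differentiable, the vector field on the right-hand side of \autoref{eq:gradient_flow_wd} is locally Lipschitz, so a unique solution exists on a maximal interval $[0, T_{\max})$; the a priori bound $E(t) \leq E(0)$ confines the trajectory to a bounded (hence, together with continuity of the vector field, norm-bounded-derivative) region, which rules out finite-time blow-up and forces $T_{\max} = \infty$. Once global existence is in hand, the bound above holds for all $t \geq 0$ as claimed.
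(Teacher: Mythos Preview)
Your proof is correct and follows essentially the same approach as the paper: both recognize that the weight-decayed flow is the gradient flow of the regularized loss $\hat{\mathcal{L}}_\lambda(\vtheta) = \mathcal{L}(\vtheta) + \tfrac{\lambda}{2}\sum_l\|\mW_l\|_F^2$, use its monotonicity along trajectories together with the lower bound $\mathcal{L}_{low}$, and arrive at the identical constant $C_F^2 = \tfrac{2}{\lambda}\bigl(\mathcal{L}(\vtheta_0) - \mathcal{L}_{low} + \tfrac{\lambda}{2}\sum_j\|\mW_{j,0}\|_F^2\bigr)$. Your version is in fact slightly more careful, since you explicitly compute $\dot E(t)\le 0$ and address global existence of the flow, points the paper leaves implicit.
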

\begin{proof}
    Note that \eqref{eq:gradient_flow_wd} is the gradient flow of the regularized loss function
    \begin{equation}
        \hat{\mathcal{L}}_{\lambda}(\vtheta) :=  \mathcal{L}(\vtheta) + \frac{\lambda}{2}\sum_{l=1}^{L}\|\mW_{l}\|_{F}^{2}. 
    \end{equation}
    Therefore, as $\mathcal{L}$ is continuously differentiable, $\hat{\mathcal{L}_{\lambda}(\vtheta_{t})}$ is monotonically decreasing with respect to $t$. For all $t \geq 0$, we have that
    \begin{equation}
        \mathcal{L}(\vtheta_{t}) + \frac{\lambda}{2}\sum_{l=1}^{L}\|\mW_{l, t}\|_{F}^{2} \leq \mathcal{L}(\vtheta_{0}) + \frac{\lambda}{2}\sum_{l=1}^{L}\|\mW_{l, 0}\|_{F}^{2},
    \end{equation}
    so that, for any $l\in \{1,\ldots,L\},\ t \geq 0$,
    \begin{align}
        \|\mW_{l, t}\|_{F}^2 &\leq \frac{2}{\lambda} \left ( \mathcal{L} (\vtheta_{0}) - \mathcal{L} (\vtheta_{t}) + \frac{\lambda}{2}\sum_{j=1}^{L}\|\mW_{j, 0}\|_{F}^{2} - \frac{\lambda}{2}\sum_{\substack{j=1 \\ j\neq l}}^{L}\|\mW_{j, t}\|_{F}^{2} \right )\\
        &\leq \frac{2}{\lambda} \left ( \mathcal{L}(\vtheta_{0}) - \mathcal{L}_{low} + \frac{\lambda}{2}\sum_{j=1}^{L}\|\mW_{j, 0}\|_{F}^{2} \right ) =: C_{F}^{2}.
    \end{align}
    Hence, $\|\mW_{l, t}\|_{F} \leq C_F$, for all $l\in \{1,\ldots,L\}$ and for all $t \geq 0$.
\end{proof}

The following lemma may be thought of as a generalization of \eqref{eq:1_over_l_helper} to the case of unbalanced layers.

\begin{lemma}
 For any $l \in \{1, \dots, L\}$, let $\mD_{l, t} := (\prod_{j=1}^{l-1}\mW_{j, t}^{\top}) (\mW_{l, t}^{\top}\mW_{l, t})^{L-l+1}(\prod_{j=1}^{l-1}\mW_{j, t}^\top)^\top$, then $\|\mD_{l+1, t} - \mD_{l, t}\|_{F} \leq2^{(L-l)}e^{-2\lambda t}c_{\max}C_F^{2(L-l)}$, where  $C_F$ is defined above and $c_{\max} := \max_{l} \|\mC_{l}\|_{F}$, for $\mC_l := \mW_{l, 0}\mW_{l, 0}^{\top} - \mW_{l+1, 0}^{\top}\mW_{l+1, 0}$.
\end{lemma}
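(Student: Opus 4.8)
The plan is to telescope the failure of balancedness one layer at a time and to quantify it with \autoref{lem:exp_balanced}. Write $\mP_{l} := \mW_{1,t}^{\top}\mW_{2,t}^{\top}\cdots\mW_{l,t}^{\top}$ (with $\mP_{0}$ the identity). The first step is to put $\mD_{l,t}$ and $\mD_{l+1,t}$ in a common form: applying the elementary identity $(\mW^{\top}\mW)^{k+1}=\mW^{\top}(\mW\mW^{\top})^{k}\mW$ with $k=L-l$ to the central factor of $\mD_{l,t}$ and absorbing the freed copies of $\mW_{l,t}$ into the outer products gives $\mD_{l,t}=\mP_{l}\,(\mW_{l,t}\mW_{l,t}^{\top})^{L-l}\,\mP_{l}^{\top}$, which differs from $\mD_{l+1,t}=\mP_{l}\,(\mW_{l+1,t}^{\top}\mW_{l+1,t})^{L-l}\,\mP_{l}^{\top}$ only through the central factor. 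Hence
\[
\mD_{l+1,t}-\mD_{l,t}\;=\;\mP_{l}\Big[(\mW_{l+1,t}^{\top}\mW_{l+1,t})^{L-l}-(\mW_{l,t}\mW_{l,t}^{\top})^{L-l}\Big]\mP_{l}^{\top}.
\]

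Next I would estimate the three factors separately. By submultiplicativity and \autoref{lem:norm_bound}, $\|\mP_{l}\|_{2}\le\prod_{j=1}^{l}\|\mW_{j,t}\|_{F}\le C_F^{\,l}$. For the bracket, set $\mA:=\mW_{l+1,t}^{\top}\mW_{l+1,t}$ and $\mB:=\mW_{l,t}\mW_{l,t}^{\top}$; these are positive semidefinite with $\|\mA\|_{2},\|\mB\|_{2}\le C_F^{2}$, and \autoref{lem:exp_balanced} gives $\mB-\mA=e^{-2\lambda t}\mC_{l}$, so $\|\mA-\mB\|_{F}\le e^{-2\lambda t}c_{\max}$ and $\|\mA-\mB\|_{2}\le e^{-2\lambda t}\|\mC_{l}\|_{2}$. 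Expanding $\mA^{L-l}=((\mA-\mB)+\mB)^{L-l}$ into $2^{L-l}$ monomials and discarding the lone $\mB^{L-l}$ term leaves at most $2^{L-l}$ monomials, each containing at least one factor $\mA-\mB$; bounding one such factor in Frobenius norm and all remaining factors in operator norm shows $\|\mA^{L-l}-\mB^{L-l}\|_{F}\le 2^{L-l}\,e^{-2\lambda t}c_{\max}\,C_F^{2(L-l-1)}$. Combining the three bounds gives $\|\mD_{l+1,t}-\mD_{l,t}\|_{F}\le 2^{L-l}\,e^{-2\lambda t}c_{\max}\,C_F^{2(L-1)}$, which is the claimed estimate; summing it over $l=1,\dots,L-1$ (a telescoping sum, since $\mD_{1,t}=(\mW_{1,t}^{\top}\mW_{1,t})^{L}$ and $\mD_{L,t}=\mJ_{f_t}^{\top}\mJ_{f_t}=\mA_{f,t}$) then delivers \autoref{thm:1_over_l_unbalanced}.

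The one point that needs care — and the main obstacle — is uniformity in $t$: for small $t$ the ``error'' $e^{-2\lambda t}\mC_{l}$ need not be small compared with $C_F^{2}$, so one cannot simply treat the monomials with several $\mA-\mB$ factors as higher order. The fix is the observation that, for positive semidefinite $\mX,\mY$, $\|\mX-\mY\|_{2}\le\max(\|\mX\|_{2},\|\mY\|_{2})$; applied at $t=0$ this yields $\|\mC_{l}\|_{2}=\|\mW_{l,0}\mW_{l,0}^{\top}-\mW_{l+1,0}^{\top}\mW_{l+1,0}\|_{2}\le\max(\|\mW_{l,0}\|_{2}^{2},\|\mW_{l+1,0}\|_{2}^{2})\le C_F^{2}$, so that \emph{every} factor appearing in the expansion — whether a copy of $\mA$, of $\mB$, or of $e^{-2\lambda t}\mC_{l}$ — has operator norm at most $C_F^{2}$, while a single distinguished error factor carries the decaying Frobenius weight $e^{-2\lambda t}c_{\max}$. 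Everything else is routine bookkeeping of $\|\cdot\|_{2}$ versus $\|\cdot\|_{F}$ in the submultiplicative inequalities.
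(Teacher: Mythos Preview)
Your approach is the paper's: both factor $\mD_{l+1,t}-\mD_{l,t}$ as an outer product sandwiching the difference $(\mW_{l+1,t}^{\top}\mW_{l+1,t})^{L-l}-(\mW_{l,t}\mW_{l,t}^{\top})^{L-l}$, substitute $\mW_{l+1,t}^{\top}\mW_{l+1,t}=\mW_{l,t}\mW_{l,t}^{\top}-e^{-2\lambda t}\mC_l$ from \autoref{lem:exp_balanced}, expand the resulting power into $2^{L-l}$ monomials, discard $\mB^{L-l}$, and bound the remaining factors using \autoref{lem:norm_bound}.

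Two small points are worth flagging. First, your final exponent is $C_F^{2(L-1)}$, not the $C_F^{2(L-l)}$ appearing in the lemma statement; your bookkeeping is the correct one (the outer factor $\mP_l$ contributes $C_F^{2l}$, and the paper's last displayed inequality drops the $C_F^{2(l-1)}$ it had earlier isolated), and the discrepancy is immaterial for the $\mathcal{O}(c_{\max}e^{-2\lambda t})$ conclusion of \autoref{thm:1_over_l_unbalanced}. Second, the paper deals with the small-$t$ obstacle simply by assuming $e^{-2\lambda t}c_{\max}\le C_F$ and declaring the bound valid ``for $t$ sufficiently large''; your observation that $\|\mC_l\|_2\le\max(\|\mW_{l,0}\|_2^{2},\|\mW_{l+1,0}\|_2^{2})\le C_F^{2}$ via the PSD inequality $\|\mX-\mY\|_2\le\max(\|\mX\|_2,\|\mY\|_2)$ is a clean strengthening that makes the estimate hold for all $t\ge 0$.
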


\begin{proof}
    From the definitions of $\mD_{l, t},\ \mD_{l+1,t}$, there holds
    \begin{equation}
    \mD_{l+1, t} - \mD_{l, t} = (\prod_{j=1}^{l-1}\mW_{j, t}^{\top})
    [\mW_{l, t}^{\top}(\mW_{l+1,t}^{\top}\mW_{l+1,t})^{L-l}\mW_{l, t} -  (\mW_{l, t}^{\top}\mW_{l, t})^{L-l+1}] (\prod_{j=1}^{l-1}\mW_{j, t}^{\top})^\top.
    \end{equation}

    Then, by the submultiplicity of the Frobenius norm and by \autoref{lem:norm_bound}, 
    \begin{align}
        \|\mD_{l+1, t} -  \mD_{l}\|_{F} &\leq (\prod_{j=1}^{l-1} \|\mW_{j, t}^{\top}\|_F) \cdot \|\mW_{l, t}^{\top}(\mW_{l+1,t}^{\top}\mW_{l+1,t})^{L-l}\mW_{l, t} -  (\mW_{l, t}^{\top}\mW_{l, t})^{L-l+1}\|_{F} \cdot (\prod_{j=1}^{l-1}\|\mW_{j, t}^{\top}\|_F)\\
        &\leq C_F^{2(l-1)}\|\mW_{l, t}^{\top}(\mW_{l+1,t}^{\top}\mW_{l+1,t})^{L-l}\mW_{l, t} -  (\mW_{l, t}^{\top}\mW_{l, t})^{L-l+1}\|_{F}.
    \end{align}
    From \autoref{lem:exp_balanced}, we have that $\mW_{l+1,t}^{\top}\mW_{l+1,t} = \mW_{l, t}\mW_{l, t}^{\top} - e^{-2\lambda t}\mC_{l}$. It follows
    \begin{align}
        \| \mW_{l, t}^{\top}(\mW_{l+1,t}^{\top}\mW_{l+1,t})^{L-l}\mW_{l, t} -  (\mW_{l, t}^{\top}\mW_{l, t})^{L-l+1}\|_F &= \|\mW_{l, t}^{\top}(\mW_{l, t}\mW_{l, t}^{\top}- e^{-2\lambda t}\mC_{l})^{L-l}\mW_{l, t} -  (\mW_{l, t}^{\top}\mW_{l, t})^{L-l+1}\|_F  \label{eq:interm_estelle} \\
        &= \| \sum_{j=1}^{2^{(L-l)} - 1}\mW_{l, t}^{\top} \mT_{j, t} \mW_{l, t}\|_F \\
        &\leq C_{F}^{2}\sum_{j=1}^{2^{(L-l)} - 1} \| \mT_{j, t} \|_F,
    \end{align}
    where the $\mT_{j, t}$s correspond to the terms in the binomial expansion of $(\mW_{l, t}\mW_{l, t}^{\top}+ e^{-2\lambda t}\mC_{l})^{L-l}$ that have at least one power of $e^{-2\lambda t}\mC_{l}$, which is all terms other than $(\mW_{l, t}\mW_{l, t}^{\top})^{L-l}$ (indeed, the term $(\mW_{l, t}\mW_{l, t}^{\top})^{L-l}$ will cancel with the second term of the right-hand side of \eqref{eq:interm_estelle}).
    There are $2^{(L-l)} -1$ such terms.
    For each of these terms, assuming that $e^{-2\lambda t}c_{\max} \leq C_F$, which holds asymptotically in time, there holds $\|\mT_{j, t}\|_F \leq e^{-2\lambda t}c_{\max}C_F^{2(L-l-1)}$. It follows that for $t$ sufficiently large
    \begin{equation}
        \|\mD_{l+1, t} -  \mD_{l}\|_{F} \leq C_{F}^{2}(2^{(L-l)}-1)e^{-2\lambda t}c_{\max}C_{F}^{2(L-l-1)} \leq 2^{(L-l)}e^{-2\lambda t}c_{\max}C_{F}^{2(L-l)}.
    \end{equation}
    
\end{proof}

Using these two above Lemmas, we are now able to prove \autoref{thm:1_over_l_unbalanced}.

\begin{proof}[Proof of \autoref{thm:1_over_l_unbalanced}]

Observing that $\Af_{t} = \mD_{L, t}$ and $( \mW_{1,t}^{\top}\mW_{1,t}  )^{L} =\mD_{1, t}$,
we may form a telescoping sum and use the triangle inequality, asymptotically through time to show that we have
\begin{align}
    \| \mA_{f,t} - \left ( \mW_{1,t}^{\top}\mW_{1,t} \right )^{L}\|_{F} &= \|\sum_{l = 1}^{L-1} (\mD_{l+1, t} - \mD_{l, t}) \|_{F}\\
    &\leq  \sum_{l = 1}^{L-1}\| \mD_{l+1, t} - \mD_{l, t} \|_{F} \\
    &\leq \sum_{l = 1}^{L-1}2^{(L-l)}e^{-2\lambda t}c_{\max}\hat{C}_F^{2(L-l)} \\
    &\leq e^{-2\lambda t}c_{\max}\hat{C}_F^{2L}\sum_{l = 1}^{L-1}2^{(L-l)}\\
    &= 2^{L}e^{-2\lambda t}c_{\max}\hat{C}_F^{2L},\label{eq:no_balancedness_convergence}
\end{align}
where $\hat{C}_F = \max(C_{F},\ 1)$.
As $\lambda > 0$ by assumption, this expression decays exponentially to zero as $t \rightarrow \infty$.
\end{proof}

\section{Experiment setup}

In this section, we include further details on the settings for our numerical experiments.

\paragraph{Forcing balanced initialization}
The proof of \autoref{thm:1_over_l} relies upon a balanced initialization of the weight matrices. We describe here the scheme we used to ensure that the initialization of the network is balanced, which is used in the experiments to produce \autoref{fig:scaled_alpha_plots}. Let $d_l$ denote the in-degree of the $l$th layer. In our experiments on synthetic data, there holds $d_1=20$, but $d_l = 64$ for $l = 2, \ldots, L$. We state the following Lemma, which for completeness, we prove in \autoref{sec:lem:W1_rescaling_proof}\footnote{We consider the square of the Frobenius norm to simplify the algebraic expressions.}.
\begin{lemma}\label{lem:W1_rescaling}
Suppose the entries of the weight matrix $\mW_{l} \in \R^{d_{l+1} \times d_{l}}$ are drawn independently according to the uniform distribution $U(-1/\sqrt{d_l}, 1/\sqrt{d_l})$, then 
\begin{equation}
    \mathbb{E}[\|\mW_{l}^{\top}\mW_{l}\|_{F}^{2}] = \frac{d_{l+1}}{d_{l}} \left ( \frac{1}{5} + \frac{d_{l}+d_{l+1}-2}{9}  \right ), 
\end{equation}
\end{lemma}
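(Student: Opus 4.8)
The plan is to compute $\mathbb{E}[\|\mW_l^\top \mW_l\|_F^2]$ directly by expanding the Frobenius norm as a double sum over the entries of the $d_l \times d_l$ matrix $\mW_l^\top \mW_l$, and then carefully bookkeeping the resulting moments of the i.i.d.\ uniform entries of $\mW_l$. Write $\mW_l = (w_{ki})$ with $k \in \{1,\ldots,d_{l+1}\}$ indexing rows and $i \in \{1,\ldots,d_l\}$ indexing columns, so that $(\mW_l^\top \mW_l)_{ij} = \sum_{k=1}^{d_{l+1}} w_{ki} w_{kj}$. Then
\begin{equation}
\|\mW_l^\top \mW_l\|_F^2 = \sum_{i=1}^{d_l}\sum_{j=1}^{d_l}\left(\sum_{k=1}^{d_{l+1}} w_{ki} w_{kj}\right)^2 = \sum_{i,j}\sum_{k,k'} w_{ki}w_{kj}w_{k'i}w_{k'j}.
\end{equation}
First I would split this into the diagonal case $i=j$ and the off-diagonal case $i\neq j$, and within each, split the $k$-sum into $k=k'$ and $k\neq k'$, using independence to factor expectations whenever the four indices do not all coincide.

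The key computational inputs are the moments of $w \sim U(-1/\sqrt{d_l},1/\sqrt{d_l})$: namely $\mathbb{E}[w]=0$, $\mathbb{E}[w^2] = \frac{1}{3d_l}$, and $\mathbb{E}[w^4] = \frac{1}{5d_l^2}$. For the diagonal terms ($i=j$), one gets $\sum_i \sum_{k,k'} \mathbb{E}[w_{ki}^2 w_{k'i}^2]$, which separates into $k=k'$ contributing $d_l \cdot d_{l+1}\cdot \mathbb{E}[w^4] = \frac{d_{l+1}}{5 d_l}$ and $k\neq k'$ contributing $d_l\cdot d_{l+1}(d_{l+1}-1)(\mathbb{E}[w^2])^2 = \frac{d_{l+1}(d_{l+1}-1)}{9 d_l}$. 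For the off-diagonal terms ($i\neq j$), only the $k=k'$ part survives (since if $k\neq k'$ the term contains an odd power of some $w$), giving $d_l(d_l-1)\cdot d_{l+1}(\mathbb{E}[w^2])^2 = \frac{d_l(d_l-1)d_{l+1}}{9 d_l} = \frac{(d_l-1)d_{l+1}}{9}$. Adding these three contributions yields
\begin{equation}
\mathbb{E}[\|\mW_l^\top\mW_l\|_F^2] = \frac{d_{l+1}}{5 d_l} + \frac{d_{l+1}(d_{l+1}-1)}{9 d_l} + \frac{(d_l-1)d_{l+1}}{9} = \frac{d_{l+1}}{d_l}\left(\frac{1}{5} + \frac{(d_{l+1}-1) + d_l(d_l-1)/d_l\cdot d_l}{9}\right),
\end{equation}
and a final algebraic regrouping of the last two terms over the common factor $d_{l+1}/d_l$ gives $\frac{1}{9}\big((d_{l+1}-1) + (d_l-1)\big) = \frac{d_l + d_{l+1} - 2}{9}$, matching the claimed formula.

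I do not anticipate a genuine obstacle here — the statement is a routine second-moment computation — so the "hard part" is purely organizational: being disciplined about which of the four summation indices $i,j,k,k'$ coincide, and remembering that any configuration leaving an isolated (odd-power) factor $w_{ki}$ contributes zero by $\mathbb{E}[w]=0$. One sanity check I would run is the case $d_{l+1}=1$ (a single row, so $\mW_l$ is a row vector $\vw$ and $\mW_l^\top\mW_l = \vw\vw^\top$ is rank one with $\|\vw\vw^\top\|_F^2 = \|\vw\|_2^4$): the formula predicts $\frac{1}{d_l}(\frac15 + \frac{d_l-1}{9})$, which should coincide with $\mathbb{E}[(\sum_i w_i^2)^2] = d_l\mathbb{E}[w^4] + d_l(d_l-1)(\mathbb{E}[w^2])^2 = \frac{1}{5d_l} + \frac{d_l-1}{9d_l}$ — it does. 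With that confidence, I would simply present the expansion, the three moment contributions, and the regrouping, keeping the displayed computations compact.
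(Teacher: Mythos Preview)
Your approach is correct and essentially identical to the paper's: both expand $\|\mW_l^\top\mW_l\|_F^2$ as a quadruple sum, split into the same four cases on $(i,j,k,k')$, and evaluate using the second and fourth moments of the uniform distribution (the paper first computes for $U(-1,1)$ and then rescales, whereas you work directly with $U(-1/\sqrt{d_l},1/\sqrt{d_l})$, a cosmetic difference). Note one arithmetic slip to clean up in your write-up: the off-diagonal $k=k'$ contribution is $d_l(d_l-1)d_{l+1}(\mathbb{E}[w^2])^2 = \frac{(d_l-1)d_{l+1}}{9\,d_l}$, not $\frac{(d_l-1)d_{l+1}}{9}$ --- your final regrouping already uses the correct value, so this is just a typo in the intermediate display.
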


As a result, $\mathbb{E} \left [ \|\mW_{1}\mW_{1}^{\top}\|_{F}^{2} \right ] \neq \mathbb{E} \left [ \|\mW_{2}^{\top}\mW_{2}\|_{F}^{2} \right ]$. Thus we use the following scaling of the weights of $\mW_{1}$:
\begin{equation}
    \Tilde{\mW}_1 = \sqrt{\frac{d_{1}d_{3} \left (  5d_{2}+5d_{3}-1  \right )}{d_{2}^{2} \left (  5d_{1}+5d_{2}-1  \right )}}\cdot \mW_1.
    \label{eq:balanced_initialization}
\end{equation}
This correction ensures that $\mathbb{E}[\|\tilde{\mW}_{1}\tilde{\mW}_{1}^{\top}\|_{F}^{2}] = \mathbb{E}[\|\mW_{2}^{\top}\mW_{2}\|_{F}^{2}]$.

We apply this correction at the start of our procedure to enforce a balanced initialization. We use $\texttt{Haar}(d_{l+1}, d_{1})$ to denote the leading $d_{1}$ columns of a $d_{l+1} \times d_{l+1}$ Haar-distributed \citep{meckes_random_2019} random matrix.
This assumes that $d_{l+1} \geq d_1$, which holds for the architectures for which we experiment with balanced initialization. Note that we only balance the linear section of the networks. The output layer is not modified.

\renewcommand{\algorithmicrequire}{\textbf{Input:}}
\renewcommand{\algorithmicensure}{\textbf{Output:}}

\begin{algorithm}
\caption{Force balancedness between layers of a network}\label{alg:force_balancedness}
\begin{algorithmic}
    \Require{$\mW_{1}$}
    \Ensure{$\tilde{\mW}_{1},\ldots,\tilde{\mW}_{L}$, with $\tilde{\mW}_{l}\tilde{\mW}_{l}^{\top} = \tilde{\mW}_{l+1}^{\top}\tilde{\mW}_{l+1}\quad \forall l \gets 1,\ldots,L-1$}
    \State $\tilde{\mW}_{1} \gets \mW_{1}$ \Comment(Using the above formula)
    \State $\mU_{1}, \mSigma, \mV_1 \gets \texttt{SVD}(\tilde{\mW}_{1})$ \Comment(Using the reduced SVD)
    \For{$l \gets 2$ to $L$}
        \State $\mU_{l} \gets \texttt{Haar}(d_{l+1}, d_{1})$
        \State $\tilde{\mW}_{l} \gets \mU_{l}\mSigma \mU_{l-1}^{\top}$
    \EndFor
    \Return $\tilde{\mW}_{1},\ldots,\tilde{\mW}_{L}$
\end{algorithmic}
\end{algorithm}

To confirm that this initialization is balanced, observe that for $l = 1,\ldots,L-1$, we have
\begin{align*}
    \tilde{\mW}_{l+1}^{\top}\tilde{\mW}_{l+1} &= (\mU_{l+1}\mSigma \mU_{l}^{\top})^{\top} \mU_{l+1}\mSigma U_{l}^{\top} \\
    &= \mU_{l}\mSigma \mU_{l+1}^{\top} \mU_{l+1}\mSigma \mU_{l}^{\top} \\
    &= \mU_{l}\mSigma \mU_{l-1}^{\top} \mU_{l-1}\mSigma \mU_{l}^{\top} \\
    &= \mU_{l}\mSigma \mU_{l-1}^{\top} (\mU_{l}\mSigma \mU_{l-1}^{\top})^{\top} \\
    &= \tilde{\mW}_{l}\tilde{\mW}_{l}^{\top}.
\end{align*}

\section{Additional numerical experiments}

In this section, we present additional numerical experiments.
In \autoref{sec:numerics}, we include results for SGD and GD with and without momentum, learning rank-5 functions with ReLU link functions. We include here results for the Adam optimization algorithm, as well as results for rank-$2$ and rank-$20$ target functions (note that the latter are full rank) and target functions with different link functions. In addition, we conduct experiments on the MNIST \citep{lecun_gradient-based_1998} dataset, to test whether we see similar results when testing on non-synthetic datasets.

\paragraph{Results for Adam.}
Here, we replicate Figures \ref{fig:scaled_alpha_plots} and $\ref{fig:cosine_similarity_though_time}$, replacing SGD by the Adam optimization algorithm. 
The results are included in \autoref{fig:plots_for_adam}. 
Compared to SGD in \autoref{fig:scaled_alpha_plots}, we see that Adam has better alignment after training, when $\lambda = 10^{-3}$, which may be explained by the effect of momentum.
When comparing Adam with SGD with momentum (see \autoref{fig:cosine_similarity_though_time}), we see that the alignment happens for the former at a slightly slower rate than for the latter when $\lambda = 10^{-2}$ (seen in plot (c) of both figures). Since the results for Adam and SGD with momentum are comparable, this suggests that the theoretical results from \autoref{sec:nfa_proof} may be applicable to more general training schemes than  GD (which is a discretization of the gradient flow) and vanilla SGD.  

\begin{figure*}
    \centering
    \begin{subfigure}{0.48\textwidth}
        \centering
        \includegraphics[width=\textwidth, height=0.5\textwidth]{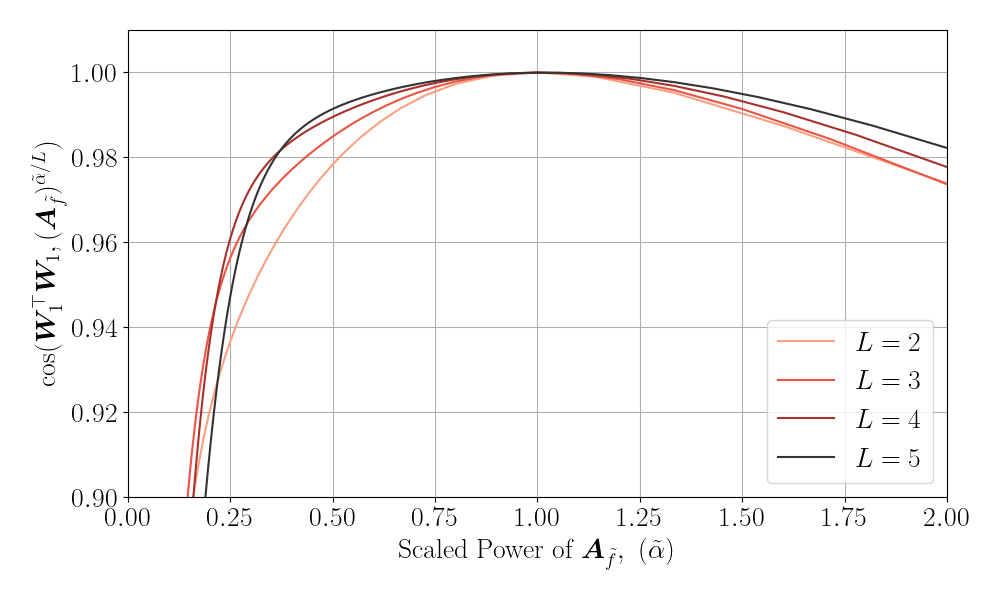}
        \caption{$\lambda = 10^{-2}$}
    \end{subfigure}
    \hfill
    \begin{subfigure}{0.48\textwidth}
        \centering
        \includegraphics[width=\textwidth, height=0.5\textwidth]{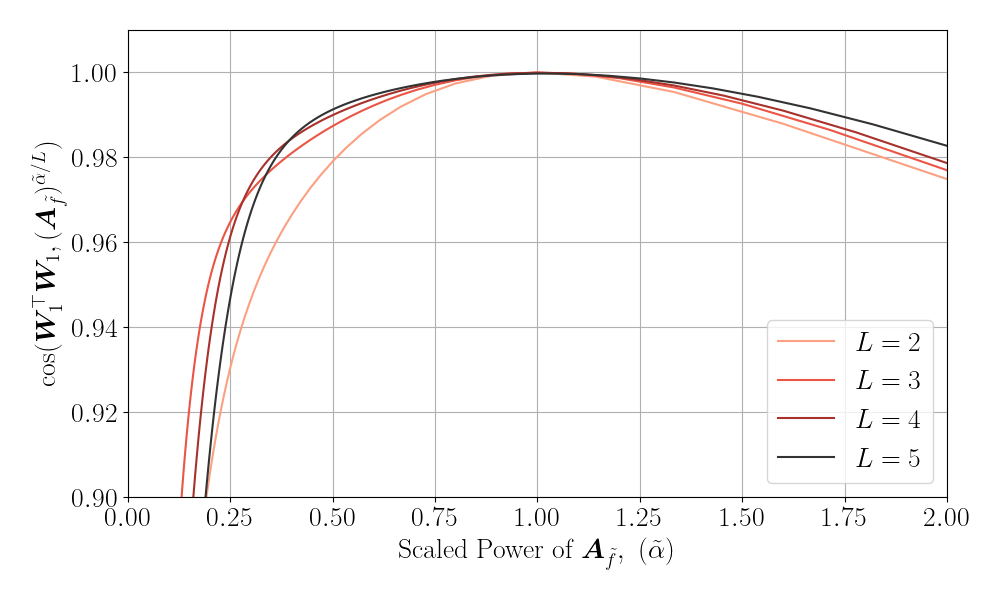}
        \caption{$\lambda = 10^{-3}$}
    \end{subfigure}
    \begin{subfigure}{0.48\textwidth}
        \centering
        \includegraphics[width=\textwidth, height=0.5\textwidth]{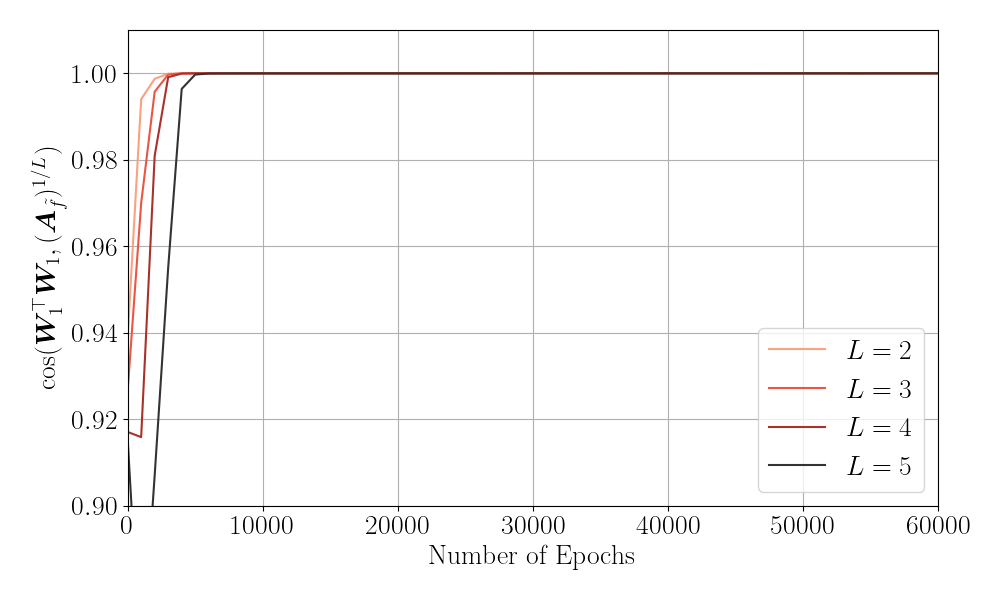}
        \caption{$\lambda = 10^{-2}$}
    \end{subfigure}
    \hfill
    \begin{subfigure}{0.48\textwidth}
        \centering
        \includegraphics[width=\textwidth, height=0.5\textwidth]{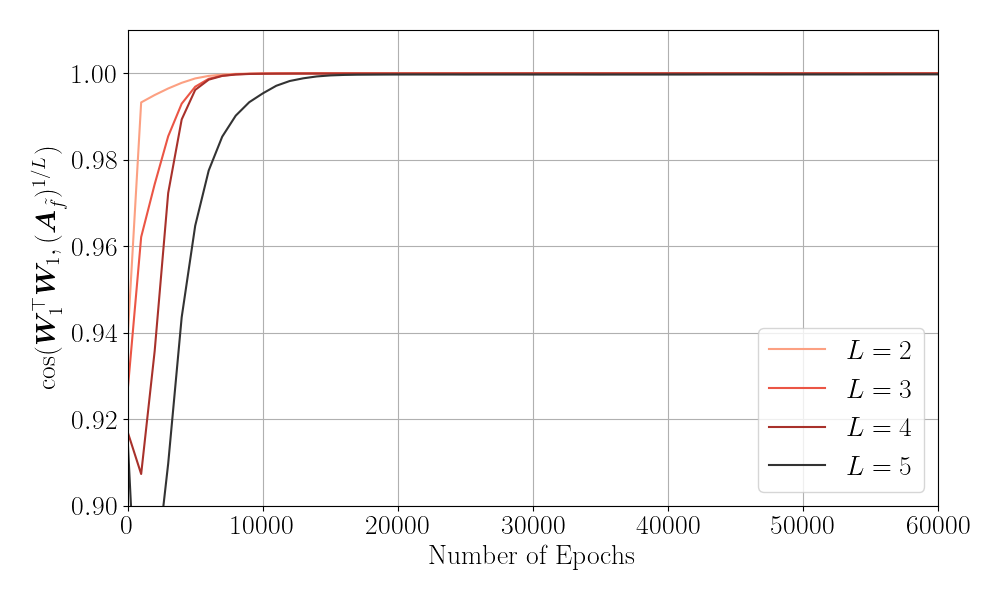}
        \caption{$\lambda = 10^{-3}$}
    \end{subfigure}
    \caption{Illustration of the impact of the weight decay ($\lambda$) on the alignment between $\WonetWone$ and $(\Af)^{1/L}$ at the end of training (Adam). A learning rate of $\eta = 10^{-4}$ was used.}
    \label{fig:plots_for_adam}
\end{figure*}

\paragraph{Changing the link function.}

Instead of the ReLU function, we now show experiments where the data is generated through a Gaussian function, which is given by $g(x) = \exp(-x^{2})$, applied element-wise, in the same manner as the ReLU function in \autoref{sec:numerics}. The results are included in \autoref{fig:cosine_similarity_though_time_gauss}. When training the networks with a weight decay rate of $\lambda = 10^{-2}$, the weights of the network tend to zero, and the network does not learn the target function. Additionally, for both GD and SGD, momentum is required to successfully learn the target function in this instance. We see that the alignment is slower for the Gauss function than the ReLU link function.

\begin{figure*}
    \centering
    \begin{subfigure}{0.325\textwidth}
    \centering
        \includegraphics[width=\textwidth, height=0.5\textwidth]{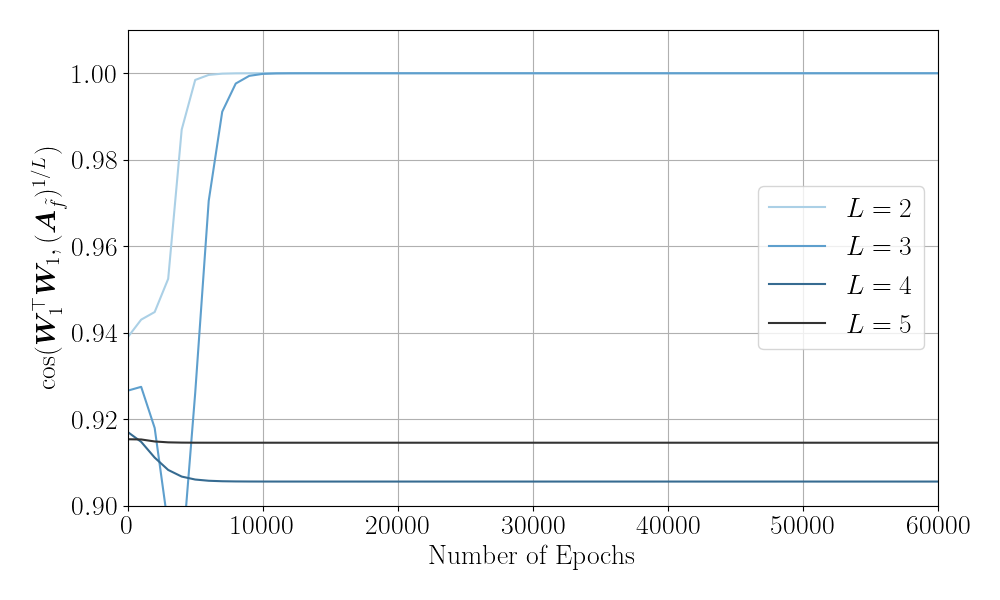}
        \caption{SGD with momentum, $\lambda = 10^{-2}$}
    \end{subfigure}
    \hfill
    \begin{subfigure}{0.325\textwidth}
        \centering
        \includegraphics[width=\textwidth, height=0.5\textwidth]{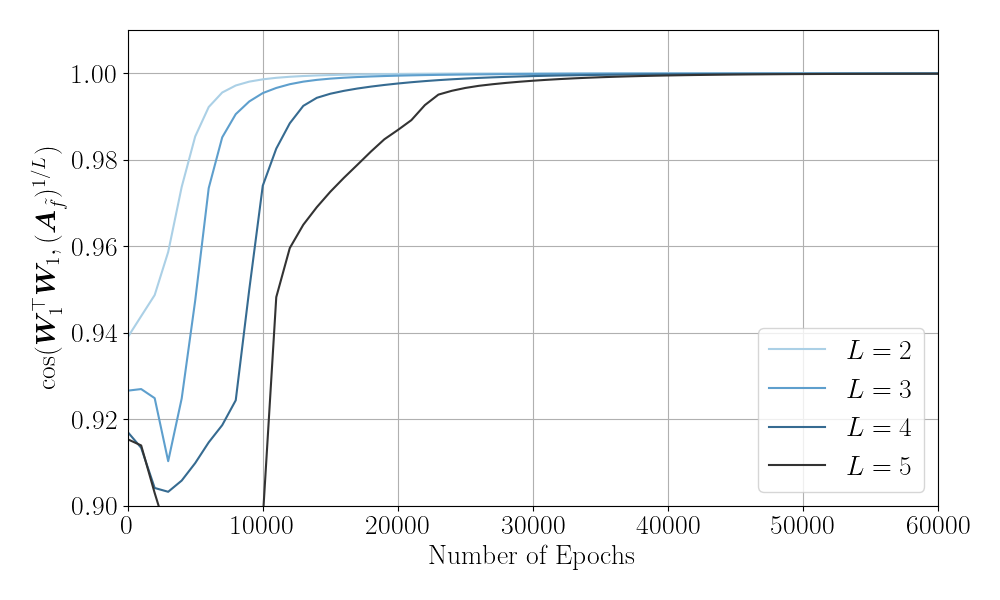}
        \caption{SGD with momentum, $\lambda = 10^{-3}$}
    \end{subfigure}
    \hfill
    \begin{subfigure}{0.325\textwidth}
        \centering
        \includegraphics[width=\textwidth, height=0.5\textwidth]{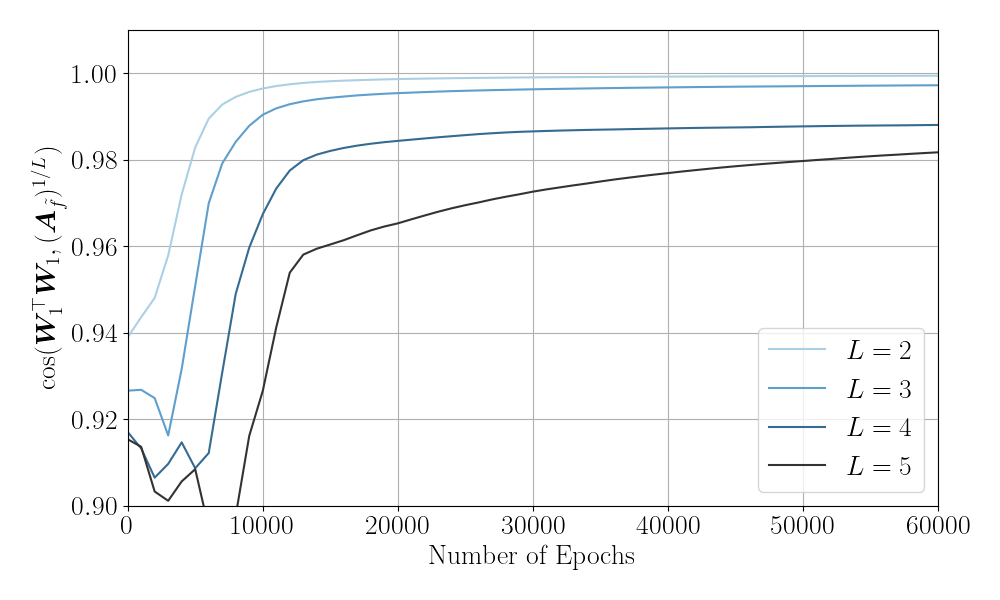}
        \caption{SGD with momentum, $\lambda = 10^{-4}$}
    \end{subfigure}
    \begin{subfigure}{0.325\textwidth}
        \includegraphics[width=\textwidth, height=0.5\textwidth]{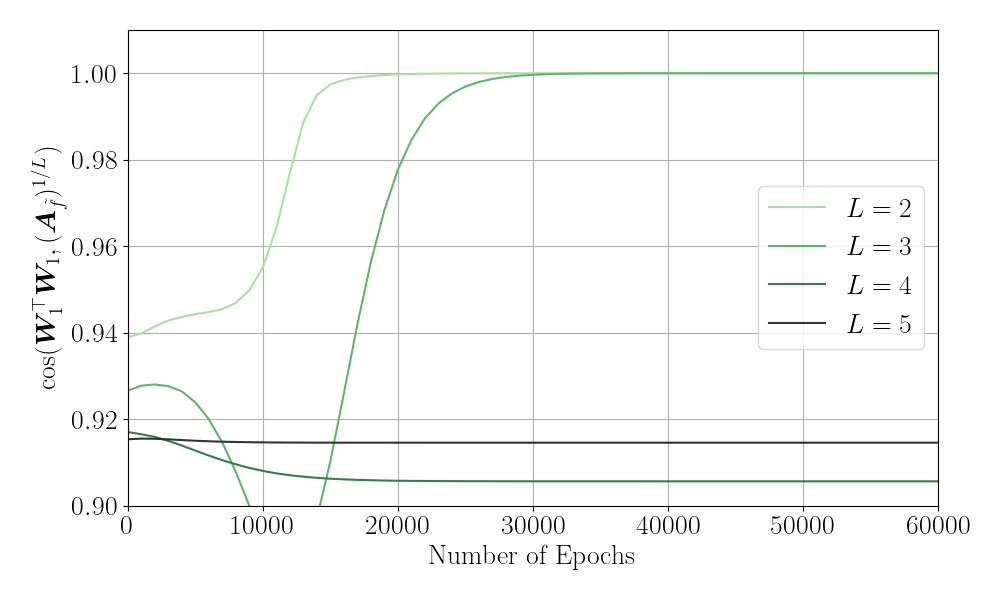}
        \caption{GD with momentum, $\lambda = 10^{-2}$}
    \end{subfigure}
    \hfill
    \begin{subfigure}{0.325\textwidth}
        \centering
        \includegraphics[width=\textwidth, height=0.5\textwidth]{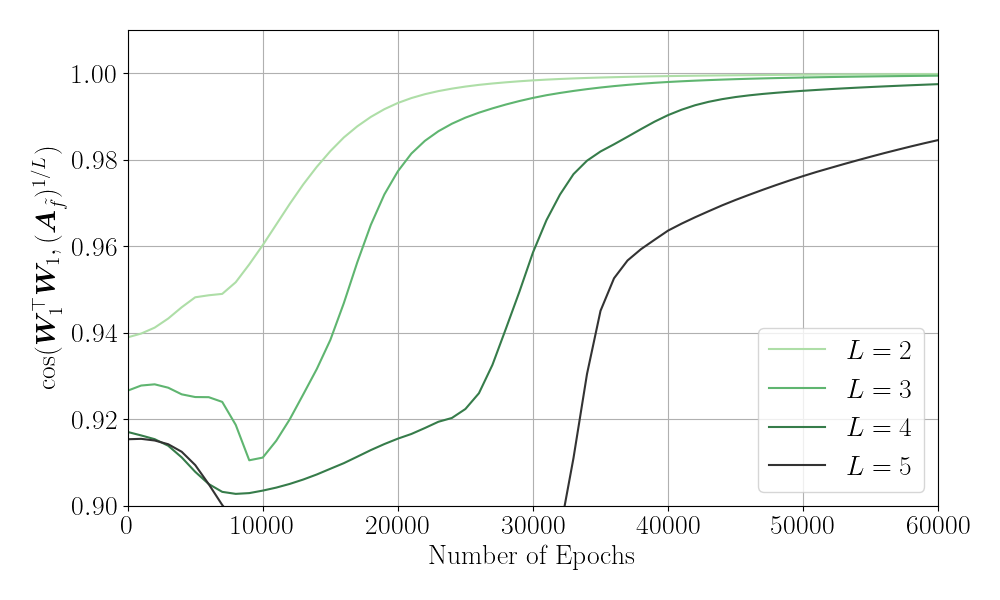}
        \caption{GD with momentum, $\lambda = 10^{-3}$}
    \end{subfigure}
    \hfill
    \begin{subfigure}{0.325\textwidth}
        \centering
        \includegraphics[width=\textwidth, height=0.5\textwidth]{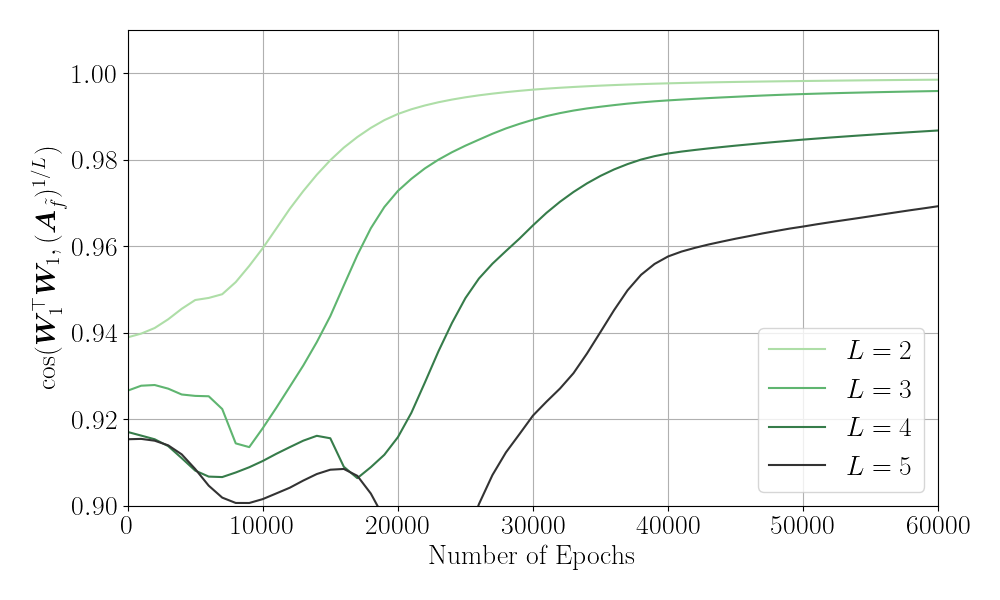}
        \caption{GD with momentum, $\lambda = 10^{-4}$}
    \end{subfigure}
    \caption{Illustration of the impact of the optimization algorithm on the NFA, varying the weight decay, for the Gauss link function. The momentum parameter is set to $\beta = 0.9$. The learning rates for SGD with momentum and GD with momentum are $\eta = 10^{-4}$ and $\eta = 10^{-3}$, respectively.
    When the weight decay is set to $\lambda = 10^{-2}$, the network weights tend to zero and the function is not learnt.}
    \label{fig:cosine_similarity_though_time_gauss}
\end{figure*}

\paragraph{Fully linear networks.}

We now include results for fully linear networks, as these match the theoretical results from \autoref{sec:nfa_proof}.
Here the architecture is given by:
\begin{equation}
    f(\vx) = \mW_{L}\mW_{L-1}\cdot\ldots\cdot\mW_{1}\vx + b_1.\label{eq:architecture_for_linear_experiments}
\end{equation}
As target function, we take
\begin{equation}
f^{*}(\vx) = g(\mA\vx + \vb),    
\end{equation}
where we either let $g$ be the identity function, so that $g(x) = x$, or we let $g : x \mapsto [x]_{+}$ be the ReLU function applied elementwise.
In both cases, $\mA \in \R^{21 \times 20}$ is a rank-$5$ matrix, meaning the output dimension is $21$, but there is low-rank structure in the problem. 
In the case of the ReLU function, the linear network cannot express the target data exactly, but will learn a linear function that best fits the data.

The results are included in \autoref{fig:cosine_similarity_though_time_linear}.  The results appear to be similar when learning the two functions. Furthermore, compared to \autoref{fig:cosine_similarity_though_time} (a), the results are qualitatively similar to the results for networks with a ReLU layer, although with slightly slower convergence.

\begin{figure*}
    \centering
    \begin{subfigure}{0.48\textwidth}
        \centering
        \includegraphics[width=\textwidth, height=0.5\textwidth]{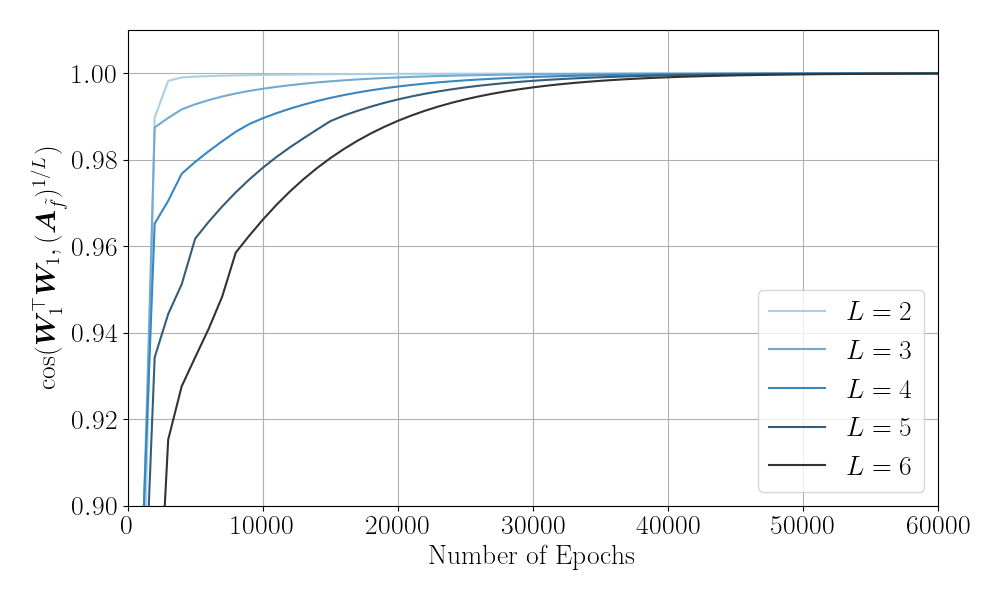}
        \caption{$g(x) = x$, $\lambda = 10^{-2}$}
    \end{subfigure}
    \hfill
    \begin{subfigure}{0.48\textwidth}
        \centering
        \includegraphics[width=\textwidth, height=0.5\textwidth]{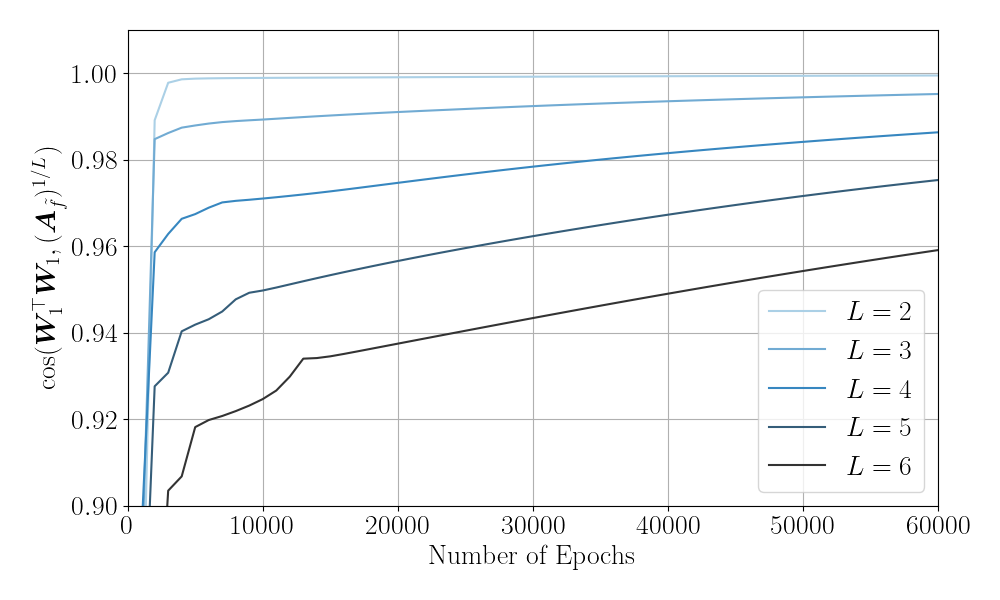}
        \caption{$g(x) = x$, $\lambda = 10^{-3}$}
    \end{subfigure}
    \begin{subfigure}{0.48\textwidth}
        \centering
        \includegraphics[width=\textwidth, height=0.5\textwidth]{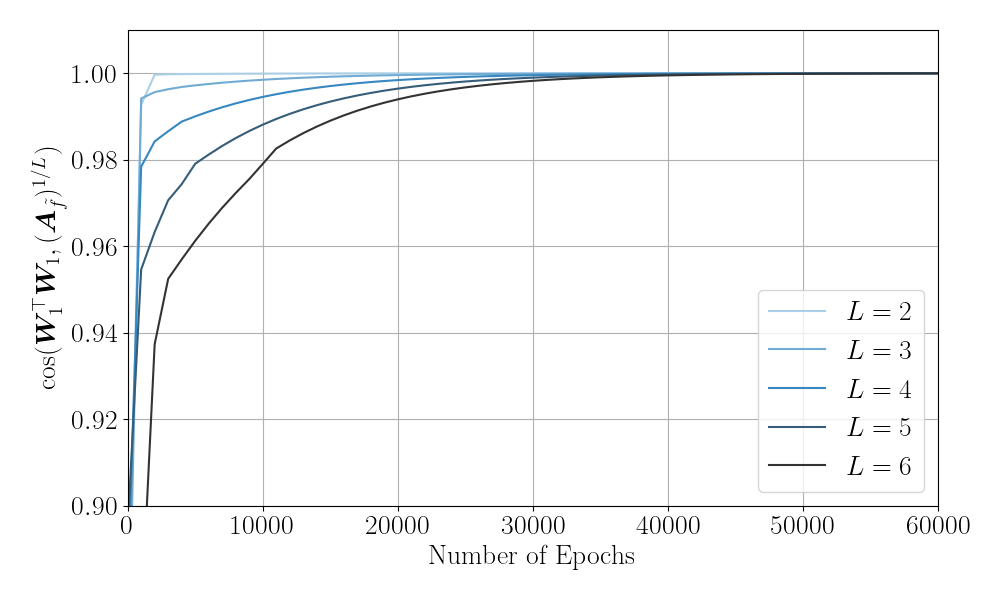}
        \caption{$g(x) = [x]_{+}$, $\lambda = 10^{-2}$}
    \end{subfigure}
    \hfill
    \begin{subfigure}{0.48\textwidth}
        \centering
        \includegraphics[width=\textwidth, height=0.5\textwidth]{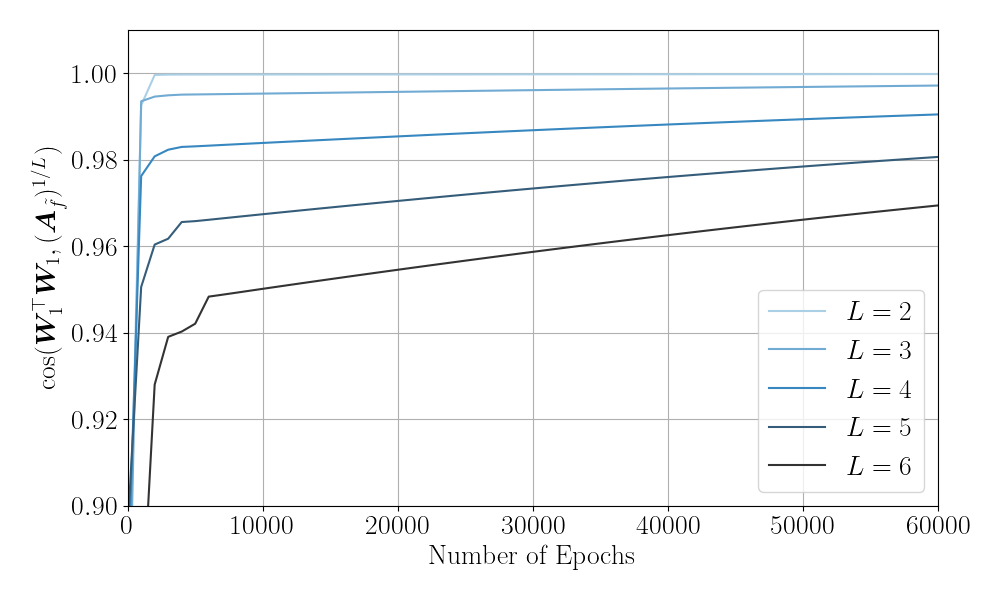}
        \caption{$g(x) = [x]_{+}$, $\lambda = 10^{-3}$}
    \end{subfigure}
    \caption{Testing the NFA on fully linear networks. We throughout use SGD with $\eta = 10^{-3}$. We vary the objective function between the top and bottom rows.}
    \label{fig:cosine_similarity_though_time_linear}
\end{figure*}

\paragraph{Experiments on MNIST.}

In addition to our experiments on synthetic data, we also present results on the MNIST dataset of handwritten digits. Note that this is a classification task, in contrast with the regression tasks that were addressed in \autoref{sec:numerics}, as such the loss function and architectures used shall differ from our earlier experiments.
We use the same optimization algorithms and hyperparameters as our other numerical experiments. However, we now run our experiments for 200 epochs, as we observe that this is typically sufficient for the training on this data set. In our experiments, training accuracy for  networks trained with Adam is typically $\geq 97\%$, whilst the training accuracy for those trained with SGD is typically $\geq 95\%$. We see that the weight matrices align more quickly when Adam is used than they do when SGD with momentum  is employed and that the rate depends on the weight decay parameter $\lambda$. We include the results in \autoref{fig:cosine_similarity_though_time_mnist}.

As we obtain similar results when learning to classify the MNIST dataset as we do for the regression task on synthetic data, this suggests that our theoretical results for the NFA on deep networks in \autoref{sec:nfa_proof} may be applicable to a broader range of tasks than those considered in \autoref{sec:numerics}.

\begin{figure*}
    \centering
    \begin{subfigure}{0.48\textwidth}
        \centering
        \includegraphics[width=\textwidth, height=0.5\textwidth]{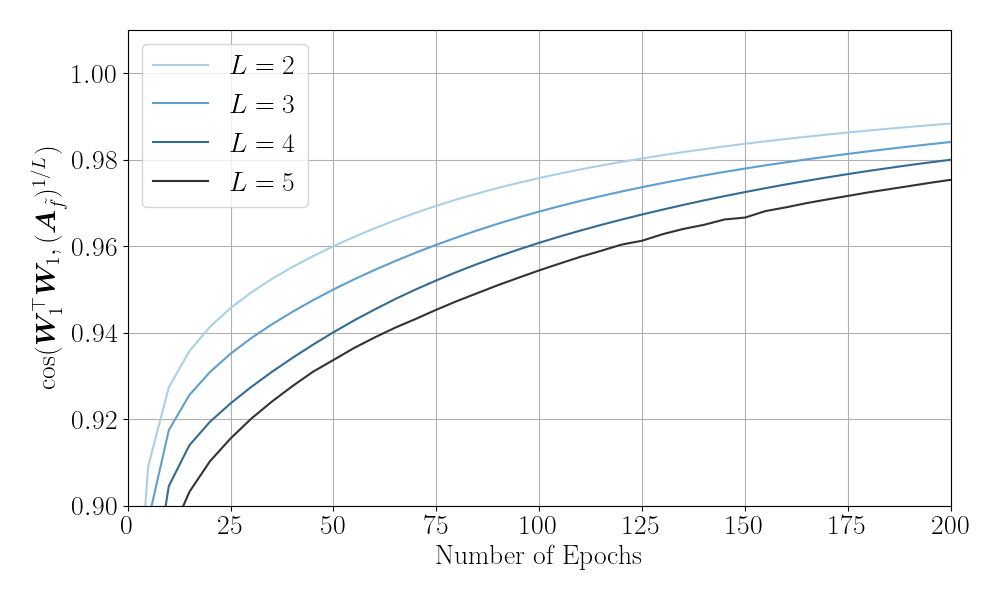}
        \caption{SGD with momentum, $\lambda  = 10^{-3}$}
    \end{subfigure}
    \hfill
    \begin{subfigure}{0.48\textwidth}
        \centering
        \includegraphics[width=\textwidth, height=0.5\textwidth]{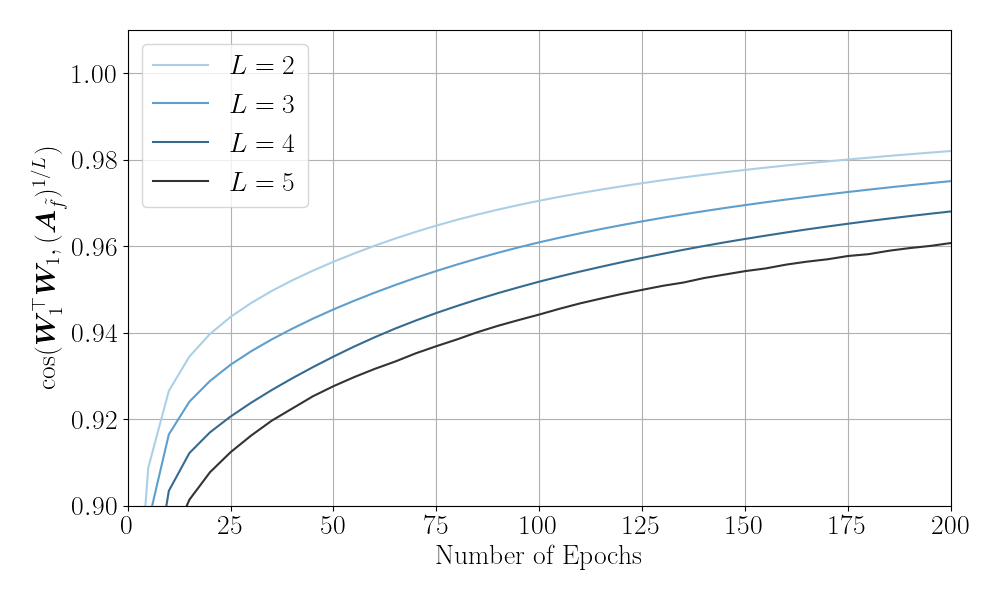}
        \caption{SGD with momentum, $\lambda  = 10^{-4}$}
    \end{subfigure}
    \begin{subfigure}{0.48\textwidth}
    \includegraphics[width=\textwidth, height=0.5\textwidth]{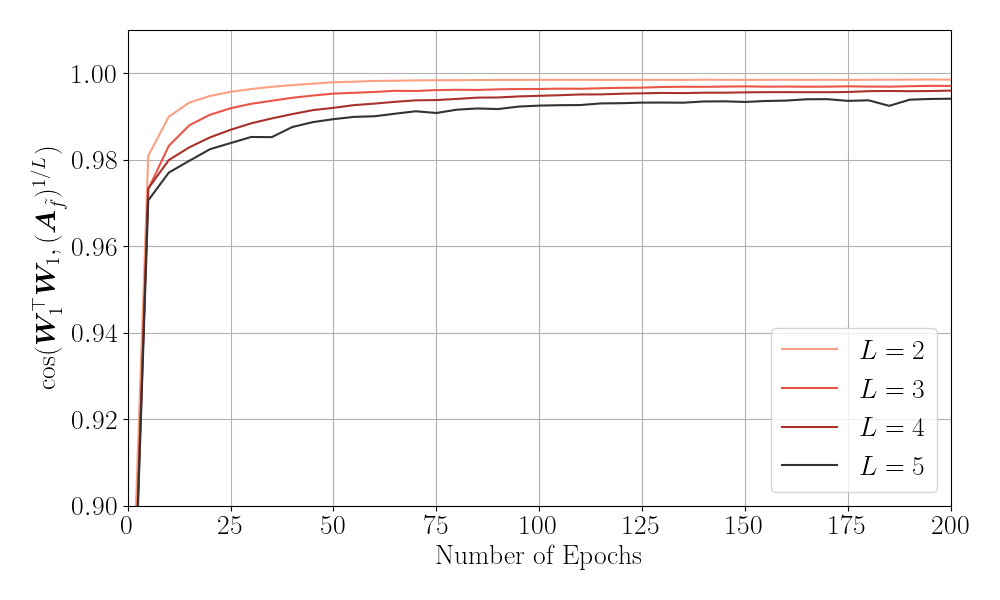}
        \caption{Adam, $\lambda  = 10^{-3}$}
    \end{subfigure}
    \hfill
    \begin{subfigure}{0.48\textwidth}
        \centering
        \includegraphics[width=\textwidth, height=0.5\textwidth]{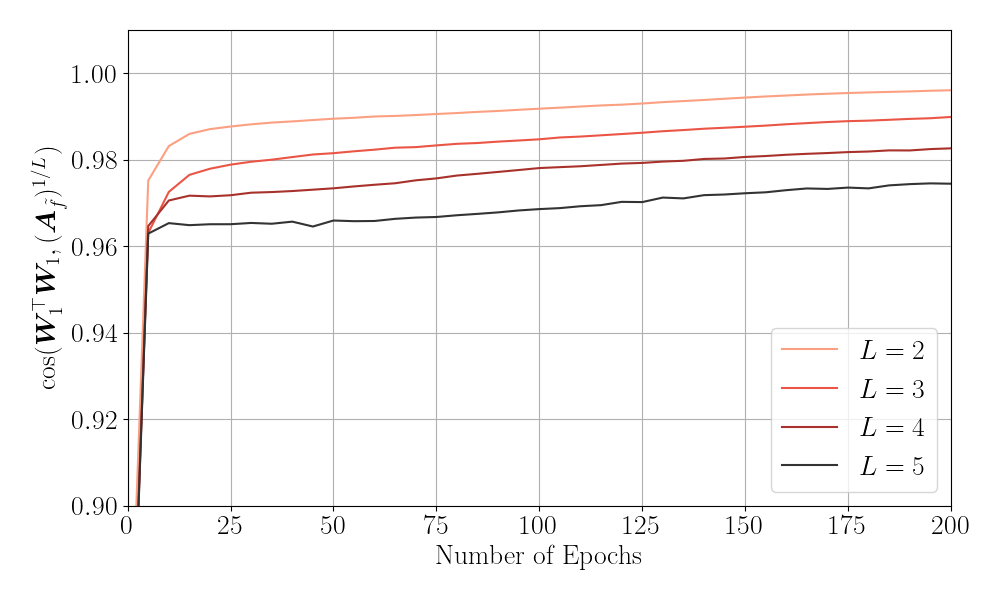}
        \caption{Adam, $\lambda  = 10^{-4}$}
    \end{subfigure}
    \caption{Illustration of the impact of the optimization algorithm on the NFA, on the MNIST dataset.}
    \label{fig:cosine_similarity_though_time_mnist}
\end{figure*}

\clearpage
\newpage

\subsection{Tables of results}

We include tables of results for all of the optimization algorithms mentioned in \autoref{sec:numerics}. We additionally include our results for the Adam optimization algorithm.
In these tables, we include the cosine similarity between $\WonetWone$ and $(\AfTilde)^{1/L}$ after training for the number of initial linear layers $L$ of the network.
We include results that have no label noise ($\sigma = 0$) as well as those that have label noise of $\sigma = 1$. In each of these tables we use $N=2048$ data points. gdm and sgdm refer to GD and SGD with momentum parameter $\beta = 0.9$, respectively. In these tables, we round all results to 2 decimal places.

\begin{table}[h]
    \centering
    \caption{Rank 2}
\begin{tabular}{rrllllll}
\toprule
sigma & Linear Layers & lambda & adam & gd & gdm & sgd & sgdm \\
\midrule
0 & 2 & $10^{-5}$ & $\textbf{1.00}$ & 0.99 & $\textbf{1.00}$ & 0.99 & $\textbf{1.00}$ \\
0 & 2 & $10^{-4}$ & $\textbf{1.00}$ & 0.99 & $\textbf{1.00}$ & 0.99 & $\textbf{1.00}$ \\
0 & 2 & $10^{-3}$ & $\textbf{1.00}$ & 0.99 & $\textbf{1.00}$ & $\textbf{1.00}$ & $\textbf{1.00}$ \\
0 & 2 & $10^{-2}$ & $\textbf{1.00}$ & $\textbf{1.00}$ & $\textbf{1.00}$ & $\textbf{1.00}$ & $\textbf{1.00}$ \\
0 & 3 & $10^{-5}$ & 0.98 & 0.97 & $\textbf{0.99}$ & 0.97 & 0.98 \\
0 & 3 & $10^{-4}$ & $\textbf{1.00}$ & 0.97 & 0.99 & 0.97 & 0.99 \\
0 & 3 & $10^{-3}$ & $\textbf{1.00}$ & 0.98 & $\textbf{1.00}$ & 0.99 & $\textbf{1.00}$ \\
0 & 3 & $10^{-2}$ & $\textbf{1.00}$ & $\textbf{1.00}$ & $\textbf{1.00}$ & $\textbf{1.00}$ & $\textbf{1.00}$ \\
0 & 4 & $10^{-5}$ & $\textbf{0.99}$ & 0.94 & 0.97 & 0.95 & 0.96 \\
0 & 4 & $10^{-4}$ & $\textbf{1.00}$ & 0.94 & 0.97 & 0.95 & 0.98 \\
0 & 4 & $10^{-3}$ & $\textbf{1.00}$ & 0.95 & 0.99 & 0.97 & $\textbf{1.00}$ \\
0 & 4 & $10^{-2}$ & $\textbf{1.00}$ & 0.99 & $\textbf{1.00}$ & $\textbf{1.00}$ & $\textbf{1.00}$ \\
0 & 5 & $10^{-5}$ & $\textbf{0.96}$ & 0.91 & $\textbf{0.96}$ & 0.92 & 0.94 \\
0 & 5 & $10^{-4}$ & $\textbf{0.99}$ & 0.92 & 0.97 & 0.92 & 0.96 \\
0 & 5 & $10^{-3}$ & $\textbf{1.00}$ & 0.92 & 0.99 & 0.95 & $\textbf{1.00}$ \\
0 & 5 & $10^{-2}$ & $\textbf{1.00}$ & 0.99 & $\textbf{1.00}$ & $\textbf{1.00}$ & $\textbf{1.00}$ \\
1 & 2 & $10^{-5}$ & $\textbf{1.00}$ & 0.99 & $\textbf{1.00}$ & $\textbf{1.00}$ & $\textbf{1.00}$ \\
1 & 2 & $10^{-4}$ & $\textbf{1.00}$ & 0.99 & $\textbf{1.00}$ & $\textbf{1.00}$ & $\textbf{1.00}$ \\
1 & 2 & $10^{-3}$ & $\textbf{1.00}$ & 0.99 & $\textbf{1.00}$ & $\textbf{1.00}$ & $\textbf{1.00}$ \\
1 & 2 & $10^{-2}$ & $\textbf{1.00}$ & $\textbf{1.00}$ & $\textbf{1.00}$ & $\textbf{1.00}$ & $\textbf{1.00}$ \\
1 & 3 & $10^{-5}$ & 0.99 & 0.98 & $\textbf{1.00}$ & 0.99 & 0.98 \\
1 & 3 & $10^{-4}$ & $\textbf{1.00}$ & 0.98 & $\textbf{1.00}$ & 0.99 & 0.99 \\
1 & 3 & $10^{-3}$ & $\textbf{1.00}$ & 0.98 & $\textbf{1.00}$ & 0.99 & $\textbf{1.00}$ \\
1 & 3 & $10^{-2}$ & $\textbf{1.00}$ & $\textbf{1.00}$ & $\textbf{1.00}$ & $\textbf{1.00}$ & $\textbf{1.00}$ \\
1 & 4 & $10^{-5}$ & 0.97 & 0.95 & $\textbf{0.99}$ & 0.98 & 0.92 \\
1 & 4 & $10^{-4}$ & $\textbf{0.99}$ & 0.95 & $\textbf{0.99}$ & 0.98 & 0.93 \\
1 & 4 & $10^{-3}$ & $\textbf{1.00}$ & 0.96 & $\textbf{1.00}$ & 0.98 & 0.98 \\
1 & 4 & $10^{-2}$ & $\textbf{1.00}$ & 0.99 & $\textbf{1.00}$ & $\textbf{1.00}$ & $\textbf{1.00}$ \\
1 & 5 & $10^{-5}$ & 0.90 & 0.92 & $\textbf{0.98}$ & 0.96 & 0.85 \\
1 & 5 & $10^{-4}$ & 0.93 & 0.92 & $\textbf{0.98}$ & 0.96 & 0.88 \\
1 & 5 & $10^{-3}$ & 0.98 & 0.93 & $\textbf{0.99}$ & 0.97 & 0.94 \\
1 & 5 & $10^{-2}$ & $\textbf{1.00}$ & 0.99 & $\textbf{1.00}$ & $\textbf{1.00}$ & 0.99 \\
\bottomrule
\end{tabular}
\end{table}

\begin{table}[h]
    \centering
    \caption{Rank 5}
\begin{tabular}{rrllllll}
\toprule
sigma & Linear Layers & lambda & adam & gd & gdm & sgd & sgdm \\
\midrule
0 & 2 & $10^{-5}$ & $\textbf{1.00}$ & $\textbf{1.00}$ & $\textbf{1.00}$ & $\textbf{1.00}$ & $\textbf{1.00}$ \\
0 & 2 & $10^{-4}$ & $\textbf{1.00}$ & $\textbf{1.00}$ & $\textbf{1.00}$ & $\textbf{1.00}$ & $\textbf{1.00}$ \\
0 & 2 & $10^{-3}$ & $\textbf{1.00}$ & $\textbf{1.00}$ & $\textbf{1.00}$ & $\textbf{1.00}$ & $\textbf{1.00}$ \\
0 & 2 & $10^{-2}$ & $\textbf{1.00}$ & $\textbf{1.00}$ & $\textbf{1.00}$ & $\textbf{1.00}$ & $\textbf{1.00}$ \\
0 & 3 & $10^{-5}$ & $\textbf{0.99}$ & $\textbf{0.99}$ & $\textbf{0.99}$ & $\textbf{0.99}$ & $\textbf{0.99}$ \\
0 & 3 & $10^{-4}$ & $\textbf{1.00}$ & 0.99 & 0.99 & 0.99 & 0.99 \\
0 & 3 & $10^{-3}$ & $\textbf{1.00}$ & 0.99 & $\textbf{1.00}$ & $\textbf{1.00}$ & $\textbf{1.00}$ \\
0 & 3 & $10^{-2}$ & $\textbf{1.00}$ & $\textbf{1.00}$ & $\textbf{1.00}$ & $\textbf{1.00}$ & $\textbf{1.00}$ \\
0 & 4 & $10^{-5}$ & 0.97 & $\textbf{0.98}$ & $\textbf{0.98}$ & $\textbf{0.98}$ & 0.97 \\
0 & 4 & $10^{-4}$ & $\textbf{1.00}$ & 0.98 & 0.98 & 0.98 & 0.98 \\
0 & 4 & $10^{-3}$ & $\textbf{1.00}$ & 0.98 & $\textbf{1.00}$ & 0.99 & $\textbf{1.00}$ \\
0 & 4 & $10^{-2}$ & $\textbf{1.00}$ & $\textbf{1.00}$ & $\textbf{1.00}$ & $\textbf{1.00}$ & $\textbf{1.00}$ \\
0 & 5 & $10^{-5}$ & 0.95 & 0.96 & 0.96 & $\textbf{0.97}$ & 0.95 \\
0 & 5 & $10^{-4}$ & $\textbf{0.99}$ & 0.96 & 0.96 & 0.97 & 0.97 \\
0 & 5 & $10^{-3}$ & $\textbf{1.00}$ & 0.97 & 0.99 & 0.98 & $\textbf{1.00}$ \\
0 & 5 & $10^{-2}$ & $\textbf{1.00}$ & $\textbf{1.00}$ & $\textbf{1.00}$ & $\textbf{1.00}$ & $\textbf{1.00}$ \\
1 & 2 & $10^{-5}$ & $\textbf{1.00}$ & $\textbf{1.00}$ & $\textbf{1.00}$ & $\textbf{1.00}$ & $\textbf{1.00}$ \\
1 & 2 & $10^{-4}$ & $\textbf{1.00}$ & $\textbf{1.00}$ & $\textbf{1.00}$ & $\textbf{1.00}$ & $\textbf{1.00}$ \\
1 & 2 & $10^{-3}$ & $\textbf{1.00}$ & $\textbf{1.00}$ & $\textbf{1.00}$ & $\textbf{1.00}$ & $\textbf{1.00}$ \\
1 & 2 & $10^{-2}$ & $\textbf{1.00}$ & $\textbf{1.00}$ & $\textbf{1.00}$ & $\textbf{1.00}$ & $\textbf{1.00}$ \\
1 & 3 & $10^{-5}$ & 0.99 & 0.99 & $\textbf{1.00}$ & 0.99 & 0.97 \\
1 & 3 & $10^{-4}$ & 0.99 & 0.99 & $\textbf{1.00}$ & $\textbf{1.00}$ & 0.98 \\
1 & 3 & $10^{-3}$ & $\textbf{1.00}$ & 0.99 & $\textbf{1.00}$ & $\textbf{1.00}$ & $\textbf{1.00}$ \\
1 & 3 & $10^{-2}$ & $\textbf{1.00}$ & $\textbf{1.00}$ & $\textbf{1.00}$ & $\textbf{1.00}$ & $\textbf{1.00}$ \\
1 & 4 & $10^{-5}$ & 0.97 & 0.98 & $\textbf{0.99}$ & $\textbf{0.99}$ & 0.94 \\
1 & 4 & $10^{-4}$ & 0.98 & 0.98 & 0.98 & $\textbf{0.99}$ & 0.94 \\
1 & 4 & $10^{-3}$ & 0.99 & 0.99 & $\textbf{1.00}$ & 0.99 & 0.98 \\
1 & 4 & $10^{-2}$ & $\textbf{1.00}$ & $\textbf{1.00}$ & $\textbf{1.00}$ & $\textbf{1.00}$ & $\textbf{1.00}$ \\
1 & 5 & $10^{-5}$ & 0.94 & 0.97 & $\textbf{0.98}$ & $\textbf{0.98}$ & 0.88 \\
1 & 5 & $10^{-4}$ & 0.96 & 0.97 & $\textbf{0.99}$ & 0.98 & 0.89 \\
1 & 5 & $10^{-3}$ & 0.99 & 0.97 & $\textbf{1.00}$ & 0.99 & 0.96 \\
1 & 5 & $10^{-2}$ & $\textbf{1.00}$ & $\textbf{1.00}$ & $\textbf{1.00}$ & $\textbf{1.00}$ & 0.99 \\
\bottomrule
\end{tabular}
\end{table}

\begin{table}[h]
    \centering
    \caption{Rank 20. \textbf{nan} indicate that the training failed resulting in nan values in the weight matrices.}
\begin{tabular}{rrllllll}
\toprule
sigma & Linear Layers & lambda & adam & gd & gdm & sgd & sgdm \\
\midrule
0 & 2 & $10^{-5}$ & 0.99 & $\textbf{1.00}$ & $\textbf{1.00}$ & $\textbf{1.00}$ & 0.99 \\
0 & 2 & $10^{-4}$ & $\textbf{1.00}$ & $\textbf{1.00}$ & $\textbf{1.00}$ & $\textbf{1.00}$ & $\textbf{1.00}$ \\
0 & 2 & $10^{-3}$ & $\textbf{1.00}$ & $\textbf{1.00}$ & $\textbf{1.00}$ & $\textbf{1.00}$ & $\textbf{1.00}$ \\
0 & 2 & $10^{-2}$ & $\textbf{1.00}$ & $\textbf{1.00}$ & $\textbf{1.00}$ & $\textbf{1.00}$ & $\textbf{1.00}$ \\
0 & 3 & $10^{-5}$ & 0.97 & $\textbf{1.00}$ & 0.98 & 0.99 & 0.92 \\
0 & 3 & $10^{-4}$ & $\textbf{1.00}$ & $\textbf{1.00}$ & 0.98 & 0.99 & 0.95 \\
0 & 3 & $10^{-3}$ & $\textbf{1.00}$ & $\textbf{1.00}$ & $\textbf{1.00}$ & $\textbf{1.00}$ & $\textbf{1.00}$ \\
0 & 3 & $10^{-2}$ & $\textbf{1.00}$ & $\textbf{1.00}$ & $\textbf{1.00}$ & $\textbf{1.00}$ & $\textbf{1.00}$ \\
0 & 4 & $10^{-5}$ & 0.94 & $\textbf{0.97}$ & 0.84 & $\textbf{0.97}$ & 0.88 \\
0 & 4 & $10^{-4}$ & $\textbf{0.98}$ & 0.97 & 0.87 & 0.97 & 0.92 \\
0 & 4 & $10^{-3}$ & $\textbf{1.00}$ & 0.98 & 0.97 & 0.98 & $\textbf{1.00}$ \\
0 & 4 & $10^{-2}$ & $\textbf{1.00}$ & 0.99 & $\textbf{1.00}$ & $\textbf{1.00}$ & 0.99 \\
0 & 5 & $10^{-5}$ & 0.94 & 0.97 & $\textbf{nan}$ & 0.94 & 0.83 \\
0 & 5 & $10^{-4}$ & 0.95 & 0.97 & $\textbf{nan}$ & 0.95 & 0.87 \\
0 & 5 & $10^{-3}$ & $\textbf{0.98}$ & 0.97 & 0.89 & 0.97 & 0.92 \\
0 & 5 & $10^{-2}$ & $\textbf{0.99}$ & 0.98 & $\textbf{0.99}$ & $\textbf{0.99}$ & 0.93 \\
1 & 2 & $10^{-5}$ & 0.98 & $\textbf{1.00}$ & $\textbf{1.00}$ & $\textbf{1.00}$ & 0.98 \\
1 & 2 & $10^{-4}$ & 0.98 & $\textbf{1.00}$ & $\textbf{1.00}$ & $\textbf{1.00}$ & 0.99 \\
1 & 2 & $10^{-3}$ & $\textbf{1.00}$ & $\textbf{1.00}$ & $\textbf{1.00}$ & $\textbf{1.00}$ & $\textbf{1.00}$ \\
1 & 2 & $10^{-2}$ & $\textbf{1.00}$ & $\textbf{1.00}$ & $\textbf{1.00}$ & $\textbf{1.00}$ & $\textbf{1.00}$ \\
1 & 3 & $10^{-5}$ & 0.95 & $\textbf{1.00}$ & 0.98 & 0.99 & 0.89 \\
1 & 3 & $10^{-4}$ & 0.96 & $\textbf{1.00}$ & 0.99 & 0.99 & 0.91 \\
1 & 3 & $10^{-3}$ & 0.99 & $\textbf{1.00}$ & $\textbf{1.00}$ & $\textbf{1.00}$ & 0.98 \\
1 & 3 & $10^{-2}$ & $\textbf{1.00}$ & $\textbf{1.00}$ & $\textbf{1.00}$ & $\textbf{1.00}$ & $\textbf{1.00}$ \\
1 & 4 & $10^{-5}$ & 0.92 & $\textbf{0.97}$ & 0.84 & $\textbf{0.97}$ & 0.86 \\
1 & 4 & $10^{-4}$ & 0.94 & 0.96 & 0.87 & $\textbf{0.97}$ & 0.87 \\
1 & 4 & $10^{-3}$ & $\textbf{0.98}$ & 0.96 & 0.95 & $\textbf{0.98}$ & 0.94 \\
1 & 4 & $10^{-2}$ & $\textbf{1.00}$ & 0.99 & $\textbf{1.00}$ & $\textbf{1.00}$ & 0.98 \\
1 & 5 & $10^{-5}$ & 0.94 & 0.97 & $\textbf{nan}$ & 0.94 & 0.83 \\
1 & 5 & $10^{-4}$ & 0.94 & 0.97 & $\textbf{nan}$ & 0.95 & 0.83 \\
1 & 5 & $10^{-3}$ & 0.97 & 0.97 & $\textbf{nan}$ & 0.97 & 0.89 \\
1 & 5 & $10^{-2}$ & $\textbf{0.99}$ & 0.98 & 0.98 & $\textbf{0.99}$ & 0.93 \\
\bottomrule
\end{tabular}
\end{table}

\clearpage
\newpage

\section{Proof of \autoref{lem:W1_rescaling}}
\label{sec:lem:W1_rescaling_proof}

We state the following standard result on moments of the uniform distribution. 

\begin{lemma}
    Suppose $\mX \sim U(-1, 1)$, then $\mathbb{E}[\mX^{k}]$ is given by:
    \begin{equation*}
        \mathbb{E}[\mX^{k}] =
        \begin{cases}
            0 &\text{for k is odd} \\
            \frac{1}{k+1} &\text{for k even}
        \end{cases}
    \end{equation*}
    \label{lem:uniform_moments}
\end{lemma}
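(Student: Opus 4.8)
The plan is a one-step computation straight from the definition: with $\mX \sim U(-1,1)$ the density is $\tfrac12$ on $[-1,1]$ and $0$ elsewhere, so $\E[\mX^{k}] = \tfrac12\int_{-1}^{1} x^{k}\,dx$, and I would split on the parity of $k$. If $k$ is odd, the integrand $x\mapsto x^{k}$ is odd on the symmetric interval $[-1,1]$, so the integral vanishes and $\E[\mX^{k}] = 0$. If $k$ is even, I would evaluate the antiderivative directly:
\begin{equation*}
\int_{-1}^{1} x^{k}\,dx = \left[\frac{x^{k+1}}{k+1}\right]_{-1}^{1} = \frac{1}{k+1}\bigl(1 - (-1)^{k+1}\bigr) = \frac{2}{k+1},
\end{equation*}
the last step because $k$ even makes $k+1$ odd, hence $(-1)^{k+1} = -1$; multiplying by $\tfrac12$ gives $\E[\mX^{k}] = \tfrac{1}{k+1}$. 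Equivalently one may fold the interval at $0$: for even $k$ one gets $\int_{-1}^{1} x^{k}\,dx = 2\int_{0}^{1} x^{k}\,dx$, while for odd $k$ the two halves cancel; either route is elementary.

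I do not anticipate any genuine obstacle here: this is a textbook moment computation, and the only point needing care is the parity bookkeeping that makes the two endpoint contributions reinforce (rather than cancel) in the even case. The reason the fact is isolated as a lemma is only to let the even moments $\E[\mX^{2}] = \tfrac13$ and $\E[\mX^{4}] = \tfrac15$, together with the vanishing of all odd moments (which annihilates the cross terms in expanding $\|\mW_{l}^{\top}\mW_{l}\|_{F}^{2}$), be invoked cleanly in the proof of \autoref{lem:W1_rescaling}. There the entries are drawn from $U(-1/\sqrt{d_{l}}, 1/\sqrt{d_{l}})$ rather than $U(-1,1)$, so each $k$-th moment picks up a factor $d_{l}^{-k/2}$; that rescaling is handled separately in the next step and does not affect the statement above.
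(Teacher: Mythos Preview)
Your proof is correct; the computation is exactly the standard one and there is nothing to add. The paper itself does not prove this lemma at all---it simply states it as a ``standard result on moments of the uniform distribution'' and moves on---so your direct integration is, if anything, more than what the paper provides.
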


\begin{lemma}\label{lem:W1_recaling_helper}
    Suppose that $\mA \in \R^{m \times n}$ has entries drawn $a_{ij} \sim U(-1, 1)$, then $\mathbb{E}[\|\mA^{\top}\mA\|_{F}^{2}] = mn \left ( \frac{1}{5} + \frac{m+n-2}{9}  \right )$. 
\end{lemma}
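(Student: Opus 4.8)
The plan is to expand $\|\mA^{\top}\mA\|_F^2$ into a sum of degree-four monomials in the entries of $\mA$, take the expectation term by term, and use independence to kill all but a handful of index patterns. Writing $(\mA^{\top}\mA)_{kl} = \sum_{i=1}^m a_{ik}a_{il}$ for $1\le k,l\le n$, I would first record
\begin{equation*}
\|\mA^{\top}\mA\|_F^2 = \sum_{k,l=1}^n \Big(\sum_{i=1}^m a_{ik}a_{il}\Big)^{2} = \sum_{k,l=1}^n \sum_{i,j=1}^m a_{ik}a_{il}a_{jk}a_{jl},
\end{equation*}
so that by linearity $\mathbb{E}\big[\|\mA^{\top}\mA\|_F^2\big] = \sum_{k,l=1}^n\sum_{i,j=1}^m \mathbb{E}[a_{ik}a_{il}a_{jk}a_{jl}]$.

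Next, since the entries are i.i.d.\ $U(-1,1)$ (in particular independent and symmetric about $0$), the monomial $a_{ik}a_{il}a_{jk}a_{jl}$ has nonzero expectation only when every distinct factor occurs an even number of times, and which factors coincide is dictated solely by whether $k=l$ and whether $i=j$. I would split into four cases: (i) $k=l$, $i=j$, where the monomial is $a_{ik}^4$ with expectation $1/5$ by \autoref{lem:uniform_moments}; (ii) $k=l$, $i\neq j$, giving $a_{ik}^2a_{jk}^2$ with expectation $(1/3)^2=1/9$; (iii) $k\neq l$, $i=j$, giving $a_{ik}^2a_{il}^2$ with expectation $1/9$; and (iv) $k\neq l$, $i\neq j$, where $a_{ik},a_{il},a_{jk},a_{jl}$ are four distinct entries each appearing once, so the expectation vanishes.

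Finally I would count the multiplicities — $n$ pairs with $k=l$ and $n(n-1)$ with $k\neq l$; $m$ pairs with $i=j$ and $m(m-1)$ with $i\neq j$ — and assemble
\begin{equation*}
\mathbb{E}\big[\|\mA^{\top}\mA\|_F^2\big] = n\Big(\frac{m}{5}+\frac{m(m-1)}{9}\Big) + n(n-1)\cdot\frac{m}{9} = \frac{mn}{5} + \frac{mn(m+n-2)}{9} = mn\Big(\frac{1}{5}+\frac{m+n-2}{9}\Big).
\end{equation*}
The only step requiring genuine care is the case analysis: one must verify that in case (iv) the four entries are really all distinct (which needs $i\neq j$ \emph{and} $k\neq l$ simultaneously), and that cases (i)--(iii) are disjoint and exhaust the surviving terms; everything else reduces to \autoref{lem:uniform_moments} and elementary counting.
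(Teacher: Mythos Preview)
Your proof is correct and follows essentially the same approach as the paper: both expand $\|\mA^{\top}\mA\|_F^2$ into a four-fold sum of monomials in the entries, split into the same four cases according to whether the two row indices coincide and whether the two column indices coincide, evaluate each case via \autoref{lem:uniform_moments} and independence, and then count and sum. The only difference is cosmetic (you label the row/column indices $i,j$ and $k,l$ where the paper uses $k,l$ and $i,j$).
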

\begin{proof}
   From definitions, we have
    \begin{equation*}
        \|\mA^{\top}\mA\|_{F}^{2}  = \sum_{i=1}^{n}\sum_{j=1}^{n}(\mA^{\top}\mA)_{ij}^{2} 
        = \sum_{i=1}^{n}\sum_{j=1}^{n}\left ( \sum_{k=1}^{n}a_{ki}a_{kj} \right )^{2} 
        = \sum_{i=1}^{n}\sum_{j=1}^{n}\sum_{k=1}^{m}\sum_{l=1}^{m}a_{ki}a_{kj}a_{li}a_{lj},
    \end{equation*}
    and hence by linearity of expectation, we have that
    \begin{equation*}
        \mathbb{E}[\|\mA^{\top}\mA\|_{F}^{2}] = \sum_{i=1}^{n}\sum_{j=1}^{n}\sum_{k=1}^{m}\sum_{l=1}^{m} \mathbb{E}[a_{ki}a_{kj}a_{li}a_{lj}].
    \end{equation*}
    We calculate the expectation of each term in this sum depending on various cases on $i,j,k,l$. We throughout use the result of \autoref{lem:uniform_moments}.
    \begin{enumerate}
    \item $i = j$ and $k = l$, \textbf{($mn$ such terms)} :
        \begin{align*}
            \mathbb{E}[a_{ki}a_{kj}a_{li}a_{lj}] = \mathbb{E}[a_{ki}^{4}] = \frac{1}{5} 
        \end{align*}
    \item $i = j$, but, $k \neq l$ \textbf{($m(m-1)n$ such terms)}:
        \begin{align*}
            \mathbb{E}[a_{ki}a_{kj}a_{li}a_{lj}] = \mathbb{E}[a_{ki}^{2}a_{li}^{2}] = \mathbb{E}[a_{ki}^{2}]\mathbb{E}[a_{li}^{2}] =  \frac{1}{9},
        \end{align*}
        where for the third inequality, we use the independence of elements
    \item $k = l$, but, $i \neq j$ \textbf{($mn(n-1)$ such terms)}:
        \begin{align*}
            \mathbb{E}[a_{ki}a_{kj}a_{li}a_{lj}] = \mathbb{E}[a_{ki}^{2}a_{kj}^{2}] = \mathbb{E}[a_{ki}^{2}]\mathbb{E}[a_{kj}^{2}] =  \frac{1}{9},
        \end{align*}
        where we again use the independence of elements
    \item $k \neq l$ and $i \neq j$ \textbf{($m(m-1)n(n-1)$ such terms)}:
        \begin{align*}
            \mathbb{E}[a_{ki}a_{kj}a_{li}a_{lj}] = \mathbb{E}[a_{ki}]\mathbb{E}[a_{kj}]\mathbb{E}[a_{li}]\mathbb{E}[a_{lj}] =  0,
        \end{align*}
        where we again use the independence of elements.
    \end{enumerate}
    By summing over these elements and rearranging, we have that
    \begin{align*}
        \mathbb{E}[\|\mA^{\top}\mA\|_{F}^{2}] &= \frac{mn}{5} + \frac{m(m-1)n}{9} + \frac{mn(n-1)}{9} \\
        &= mn \left ( \frac{1}{5} + \frac{m+n-2}{9}  \right )
    \end{align*}
\end{proof}

Noting the rescaling of the uniform distribution, we may use this result to prove \autoref{lem:W1_rescaling}.

\begin{proof}[Proof of \autoref{thm:1_over_l_unbalanced}]
We may apply \autoref{lem:W1_recaling_helper} with $m = d_{l+1},\ n = d_{l}$ and divide by $d_l^{2}$ to account for the fact that the entries are drawn from $U(-1/\sqrt{d_l}, 1/\sqrt{d_l})$ rather than $U(-1, 1)$.
\end{proof}

\end{document}